\documentclass{article}
\usepackage{iclr2027_conference}
\usepackage{times}
\usepackage{amsmath,amssymb,amsthm}
\usepackage{graphicx}
\usepackage{booktabs}
\usepackage{xcolor}
\usepackage{enumitem}
\usepackage{hyperref}
\newtheorem{theorem}{Theorem}[section]
\newtheorem{lemma}[theorem]{Lemma}

\newtheorem{corollary}[theorem]{Corollary}

\newtheorem{remark}[theorem]{Remark}

\newcommand{\R}{\mathbb{R}}
\newcommand{\E}{\mathbb{E}}
\renewcommand{\Pr}{\mathbb{P}}
\newcommand{\norm}[1]{\left\|#1\right\|}
\newcommand{\inner}[2]{\langle #1, #2 \rangle}

\title{How Should Reasoning Be Organized in a Transformer's Latent Space?}

\author{%
\parbox[t]{0.30\textwidth}{\centering\normalfont
\textbf{Hongyu Gu}\textsuperscript{1}\\
University of Science\\
and Technology of China\\
Hefei, China\\[0.35em]
{\scriptsize\texttt{ustc\_23ghy@mail.ustc.edu.cn}}}%
\And
\parbox[t]{0.30\textwidth}{\centering\normalfont
\textbf{Chang Liu}\textsuperscript{2}\\
Zhongguancun Academy\\
\phantom{and Technology of China}\\
Beijing, China\\[0.35em]
{\scriptsize\texttt{liuchang@bza.edu.cn}}}%
\And
\parbox[t]{0.30\textwidth}{\centering\normalfont
\textbf{Jingwen Fu}\textsuperscript{2}\thanks{Corresponding author.}\\
Zhongguancun Academy\\
\phantom{and Technology of China}\\
Beijing, China\\[0.35em]
{\scriptsize\texttt{jwfu99@gmail.com}}}%
}

\iclrfinalcopy

\begin{document}
\maketitle
\fancyhead{}
\lhead{Preprint}
%
\begin{abstract} 
Continuous reasoning has emerged as a promising way to improve reasoning in large language models (LLMs). Yet we still lack a clear principle for deciding what a latent state should preserve. Reasoning by superposition shows that a single latent state can encode several search alternatives and expand them in parallel. We ask how those states should be weighted as reasoning proceeds. A natural choice is to preserve only the states active at the frontier step, since keeping every reached state appears to spread a limited hidden width too thin. We show that the opposite can hold. When later computation draws on several reached states, a cumulative state can guide attention correctly at a smaller hidden width than a frontier state that stores fewer states. At the same width, the cumulative state therefore keeps more intermediate states available for later reasoning. More generally, equal cumulative weights are optimal when future queries are unknown and remain close to the best task-specific weights when those queries are known. Experiments with two-layer and GPT-2 Transformers reproduce the predicted width advantage and show that unequal weights fail first on the states that receive the least weight. This suggests a important principle: keep reached states equally weighted, and restore equal weights as computation proceeds. 

\end{abstract}

\section{Introduction} \label{sec:intro} 
What should a latent state preserve as reasoning unfolds? This question matters as language models move part of their reasoning from written tokens into continuous computation. Continuous chain-of-thought feeds latent states back into the model~\citep{hao2024coconut}; recurrent Transformers update a latent state over several steps~\citep{xu2024looped,fu2026fullylooped}; and reasoning by superposition shows that one latent state can represent several search alternatives at once~\citep{zhu2025reasoning}. These results establish that superposed reasoning is possible, but they leave open what the latent state should preserve as computation continues. 

We compare three choices. A \emph{frontier state} keeps only the states active at the current step. A \emph{cumulative state} keeps all states reached so far with equal weight. A general \emph{weighted state} keeps the same set of reached states but assigns them different nonnegative weights. All three use the same task, candidate answers, Transformer, and final query.

The usual capacity intuition favors the frontier state. When more states share one normalized latent state, each receives less weight and should become harder to recover~\citep{elhage2022toy}. Reasoning, however, rarely uses one earlier state in isolation. A later step may combine information from several reached states. Their useful components reinforce the correct answer, while unrelated candidates remain random interference. This leads to our central result: although it stores more states, the cumulative state can work at a smaller hidden width than the frontier state. 

\paragraph{Cumulative-state hypothesis.} When any reached state may be needed again, the latent state should assign equal weight to all reached states. Let $\mathfrak D(\boldsymbol a;\mathcal U)$ denote the smallest hidden width that meets the target success probability throughout a $D$-step computation for every query in $\mathcal U$. If a query assigns at most $C/\sqrt m$ weight to any one of the $m$ reached states, let $\mathcal U_C$ contain all queries that satisfy this bound. We prove \begin{equation} \mathfrak D(\boldsymbol a_{\mathrm{cum}};\mathcal U) \le C^2\inf_{\boldsymbol a}\mathfrak D(\boldsymbol a;\mathcal U), \qquad \boldsymbol a_{\mathrm{cum}} \in \arg\min_{\boldsymbol a}\mathfrak D(\boldsymbol a;\mathcal U_C). \end{equation} 
Thus, for a known query family, equal cumulative weights require at most $C^2$ times the hidden width of the best task-specific weights. When only the coefficient bound is known, equal cumulative weights are the best fixed choice in the worst case over the full class $\mathcal U_C$. 

\begin{figure}[t] \centering \includegraphics[width=\textwidth]{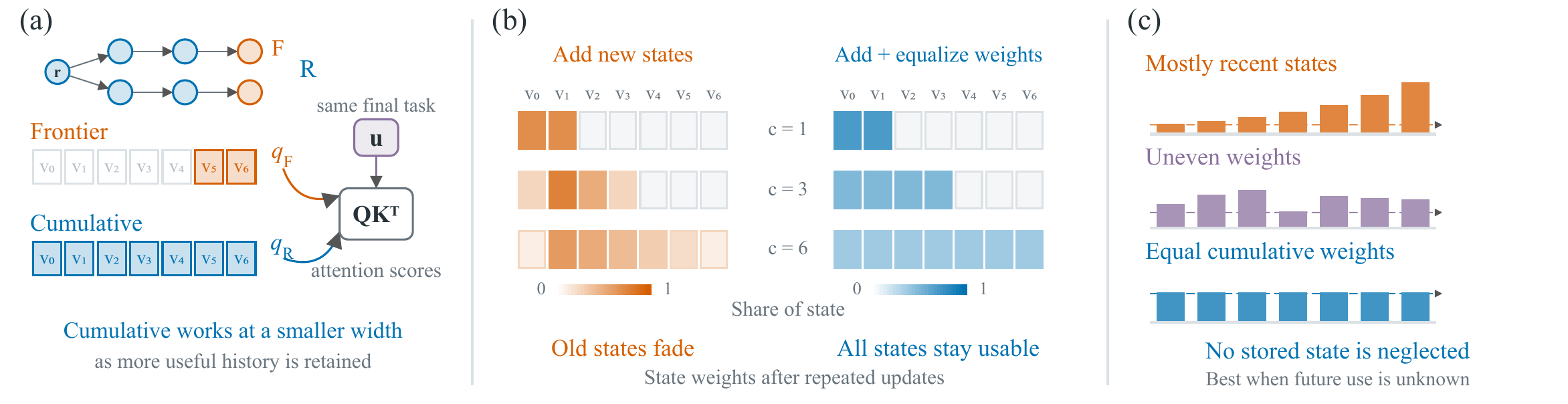} \caption{\textbf{The cumulative-state hypothesis from latent state to computation.} (a) The frontier state keeps only the states active now, whereas the cumulative state keeps all reached states for the same final task. (b) Each update adds new states; equalizing the weights keeps every reached state available. (c) Equal cumulative weights avoid a weakly represented state and are optimal when future use is unknown.} 
\label{fig:theory-overview} 
\end{figure}  

The query condition covers computations that combine information across a set, add costs along a route, execute an algorithm, or evaluate intermediate steps~\citep{zaheer2017deepsets,lee2019settransformer,kool2019routing,velickovic2022clrs,lightman2024verify}. We test the predictions with synthetic query families, paired width sweeps in a lightweight Transformer and a model built from GPT-2 blocks, and position-level interventions. Across these tests, the cumulative state reaches the target accuracy at smaller width, while unequal weights fail first at the states assigned the least weight. 

\paragraph{Contributions.} Our main contributions are: \begin{enumerate}[leftmargin=*,itemsep=2pt] \item \textbf{A counterintuitive result about latent-state capacity.} We show when preserving more reached states makes reasoning easier: their useful components reinforce later attention, allowing the cumulative state to work at a smaller hidden width than the frontier state. \item \textbf{A Transformer account of the full reasoning process.} We track how attention retrieves states, values carry new information, the residual preserves earlier states, and the output projection sets their weights across all $D$ steps. This analysis identifies the least represented state as the bottleneck for later computation. \item \textbf{An optimal rule for weighting reached states.} Equal cumulative weights are optimal when future use is unknown and remain within a $C^2$ factor of the best task-specific weights when future queries are known. Two-layer and GPT-2 experiments support both the width advantage and the predicted failure of unequal weights. 
\end{enumerate} 

\section{Related Work} \label{sec:related} 
\paragraph{Explicit and continuous reasoning.} Chain-of-thought writes intermediate computation as tokens~\citep{wei2022cot,merrill2024cot}, whereas COCONUT feeds latent states back into the model~\citep{hao2024coconut}. Recent theory shows that a single continuous thought can preserve and expand several reasoning traces in parallel~\citep{zhu2025reasoning,gozeten2025continuous}, and subsequent work studies how such superposition emerges during training~\citep{zhu2025emergence}. Zhu et al.~\citep{zhu2025reasoning} establish that frontier superposition can implement parallel search: the latent state keeps the alternatives active at the current step and expands them together. We build on this representation and study its capacity limit over a complete $D$-step Transformer computation. In particular, we compare keeping only the frontier with keeping all reached states, and we determine how the latent state should distribute weight when later queries are not known in advance. 

\paragraph{Capacity of superposition.} Work on feature superposition explains how non-orthogonal features share a low-dimensional space and how interference depends on sparsity, importance, and polysemanticity~\citep{elhage2022toy,foote2024polysemanticity,gupta2025polysemantic,bereska2025lossy}. These analyses ask how many features can be represented and whether individual features remain recoverable. Our setting instead asks whether a normalized latent state can support a later attention computation. When that computation combines several reached states, their aligned components add coherently while unrelated candidates remain random interference. This produces the reversal studied in our theory: preserving more reached states can reduce the hidden width required for correct attention. We further compare all nonnegative state weightings and show when equal cumulative weights are optimal. 

\paragraph{Recurrent computation and latent states.} Universal and looped Transformers reuse computation depth while updating a latent state~\citep{dehghani2019universal,giannou2023looped,xu2024looped,fu2026fullylooped}. Recurrent and buffer-based models similarly carry information through dedicated latent tokens~\citep{bulatov2022recurrent,yang2024bufferthoughts,lin2026tyler}. Related work studies multi-hop reasoning in long contexts and without explicit intermediate steps~\citep{li2024longcontextmultihop,yao2025implicitmultihop}, as well as the limits and internal representations of hidden reasoning~\citep{browncohen2026opaque,hu2024hopfieldian}. We connect these questions to the Transformer operations that update a latent state: attention retrieves earlier states, values carry new information, and the residual path determines what remains available. Our analysis identifies the least represented state as the bottleneck and yields a concrete principle for repeated computation: when future use is unknown, keep reached states equally weighted. 

\section{Problem Setting and Central Hypothesis}
\label{sec:setup}

We study the continuous-thought token of a recurrent two-block Transformer.
Following the copied-edge interface of \citet{zhu2025reasoning}, object
embeddings are available to attention as keys and values.  A recurrent cell
first uses $QK^\top$ attention to decide which objects are active, then writes
their values into the next thought. We use this common cell to compare one proposed state construction against its natural baseline and against all nonnegative history weightings.

Let $G_{\mathrm{graph}}=(V,E)$ be a directed graph with $n=|V|$ and root
$r$.  At a fixed reasoning depth, let
\begin{equation}
F:=\{v:\operatorname{dist}(r,v)=c\},
\qquad
R:=\{v:\operatorname{dist}(r,v)\le c\},
\label{eq:FR}
\end{equation}
with $F\subseteq R$, $|F|=k$, and $|R|=m$, where $1\le k\le m$.  Vertices
represent
intermediate facts, entities, subgoals, partial plans, or candidate states.
Across reasoning steps we write
\begin{equation}
R_c:=\{v:\operatorname{dist}(r,v)\le c\},\qquad
F_c:=\{v:\operatorname{dist}(r,v)=c\},\qquad
N_c:=R_{c+1}\setminus R_c.
\label{eq:reached-sequence}
\end{equation}
When $k$ is the number of distinct states first reached at the current depth, whereas $m$ is
the number first reached up to that depth.  Thus a length-$c$ chain has $k=1$
and $m=c+1$.

Each vertex has an independent Gaussian embedding
\begin{equation}
g_v\sim N(0,I_d/d).
\label{eq:embeddings}
\end{equation}
Equivalently, collect the embeddings as columns of
$G\in\R^{d\times n}$ with independent $N(0,1/d)$ entries.  For a coefficient
vector $a\in\R^n$, write
\begin{equation}
X_a:=Ga=\sum_{v\in V}a_vg_v,
\qquad
q_a:=\frac{X_a}{\norm{X_a}}.
\label{eq:qa}
\end{equation}
Multiplying $a$ by a positive scalar does not change $q_a$, so we use
$\norm{a}_2=1$ throughout.

The proposed state, its baseline, and the comparison family are
\begin{align}
a_F&:=\frac{\mathbf 1_F}{\sqrt{k}},
&a_R&:=\frac{\mathbf 1_R}{\sqrt{m}},
&a&\in\mathcal A_R:=\{a\in\R_+^n:\operatorname{supp}(a)\subseteq R,\ \norm a_2=1\},
\label{eq:frontier-state}\\
q_F&:=q_{a_F},
&q_R&:=q_{a_R},
&q_a&:=\frac{X_a}{\norm{X_a}}.
\label{eq:cumulative-state}
\end{align}
Frontier is the baseline that retains
only current alternatives.  Uniform cumulative is the proposed state.  The
set $\mathcal A_R$ contains every nonnegative history weighting against which
we compare that proposal.

Let $h(a)$ denote the thought token carrying $q_a$, and let $h_j$ be a
candidate token carrying coefficient direction $u_j$.  The final
Transformer head uses
\begin{equation}
W_Qh(a)=q_a,\qquad
W_Kh_j=X_{u_j}=Gu_j,\qquad
W_Vh_j=v_j,
\label{eq:qkv-interface}
\end{equation}
where, independently of $G$,
\begin{equation}
u_0,u_1,\ldots,u_M\in\R^n
\label{eq:bank}
\end{equation}
are unit directions, $u_0$ is useful, $u_1,\ldots,u_M$ compete with it, and
$v_j$ is the value routed by candidate $j$.  Its actual forward pass is
\begin{equation}
\alpha_j(q_a)
:=\frac{\exp(\inner{q_a}{X_{u_j}}/\tau)}
{\sum_{i=0}^M\exp(\inner{q_a}{X_{u_i}}/\tau)},
\qquad
\operatorname{Attn}(q_a)=\sum_{j=0}^M\alpha_j(q_a)v_j,
\label{eq:attention-forward}
\end{equation}
and the selected candidate is
\begin{equation}
\widehat j(q_a)
:=\arg\max_{0\le j\le M}\alpha_j(q_a)
=\arg\max_{0\le j\le M}\inner{q_a}{X_{u_j}}.
\label{eq:decoder}
\end{equation}
Success means that this head assigns its largest attention weight to the
useful value, $\widehat j(q_a)=0$.  A node key is recovered by $u_0=e_v$. The final attention head uses the same $M$ competing keys for every state
weighting.  We assume
\(
u_1,\ldots,u_M\ \text{are orthonormal and orthogonal to }
\operatorname{span}\{a,u_0\},
\label{eq:isotropic-competition}
\)
so no competing key has a built-in alignment advantage.  When comparing
states supported on $R$, we choose the bank orthogonal to the $R$ coordinates,
making it common to all weightings and requiring $M\le n-m$.
Appendix~\ref{app:arbitrary-bank} treats a general fixed bank.

For the comparisons below, the useful direction $u_0$ is
nonnegative and supported on $R$.  The population quantity is
\begin{equation}
\mu(a,u_0):=\inner{a}{u_0}\in[0,1].
\label{eq:mu}
\end{equation}
Nonnegative $a,u_0$ express the positive state components represented by the
superposition construction. Let $\mathcal U$ be the set of possible useful directions, all nonnegative,
unit-normalized, and supported on $R$.  We assume that no direction places
more than $C/\sqrt m$ weight on one reached object:
\begin{equation}
\norm{u}_\infty\le \frac{C}{\sqrt m},
\qquad u\in\mathcal U,
\label{eq:central-query-condition}
\end{equation}
where $C\ge1$. It measures how unevenly a
future computation may use the reached history.  A uniform history query has
$C=1$; bounded variation over a constant fraction of the history has
$C=O(1)$.

\paragraph{Central hypothesis.}
Among all sequences of nonnegative history weights used by the recurrent
Transformer, uniform cumulative's joint width measure should be within $C^2$
of the best sequence chosen with advance knowledge of $\mathcal U$.  If only
condition~\eqref{eq:central-query-condition} is known and the realized future
query is not, uniform cumulative should be the best
single choice.  Sections~\ref{sec:dimension-law} and~\ref{sec:comparison}
prove these statements; Section~\ref{sec:experiments} tests their predicted
ordering and Transformer mechanism.

\section{The Width of a $D$-Step Transformer Computation}
\label{sec:dimension-law}

We measure the width needed to keep a state usable throughout $D$ attention
calls and the final answer. Fix a coefficient sequence
$\boldsymbol a=(a_c)_{c=0}^D$, independently of $G$, with
$a_c\in\mathcal A_{R_c}$. Let $S_c$ contain the sources that must remain
usable at step $c$, and let $I_c\subseteq V\setminus\operatorname{supp}(a_c)$
contain the inactive sources. Define
\begin{equation}
a_{\min,c}:=\min_{v\in S_c}a_{c,v},\qquad
\alpha_\star:=\min_{c<D}a_{\min,c},\qquad
\tau_c:=3a_{\min,c}/8.
\label{eq:amin}
\end{equation}
Frontier uses $S_c=F_c$; full reuse of cumulative states uses $S_c=R_c$.
The intended states $q_{a_c}$ give the joint routing event
\begin{equation}
\mathcal E_{\rm route}(d):=
\bigcap_{c<D}\left\{
\min_{v\in S_c}\inner{q_{a_c}}{g_v}>\tau_c>
\max_{s\in I_c}\inner{q_{a_c}}{g_s}\right\}.
\label{eq:routing-event}
\end{equation}
For a nonempty compact final query class $\mathcal U\subseteq\mathcal A_{R_D}$,
put
\begin{equation}
\Gamma(a_D;\mathcal U):=\inf_{u\in\mathcal U}\inner{a_D}{u}.
\label{eq:Gamma}
\end{equation}
Write $d_D$ for the least width at which the intended states satisfy both
\eqref{eq:routing-event} and the correct final-answer event with probability
at least $1-\delta$ for every fixed $u\in\mathcal U$. Its width scale is
\begin{equation}
\mathfrak D(\boldsymbol a;\mathcal U):=
\max\left\{\max_{c<D}\frac{\log(Dn/\delta)}{a_{\min,c}^2},
\frac{2\log M}{\Gamma(a_D;\mathcal U)^2}\right\}.
\label{eq:weighted-dimension}
\end{equation}
The actual Transformer state is denoted by $\widetilde q_c$. The construction
in Section~\ref{sec:transformer} receives copied edge buffers, fixed masks,
deduplicated new destinations, and their counts. These inputs and all target
coefficients are fixed before $G$ is sampled. The following result connects
the width comparison to repeated attention, residual writing, and RMSNorm.

\begin{theorem}[Width and stability of $D$-step Transformer computation]
\label{thm:transformer-realization}
Fix $D$ and $\delta\in(0,1)$, assume $\alpha_\star>0$ and
$\Gamma:=\Gamma(a_D;\mathcal U)>0$, and use $\Theta(n)$ inactive node keys
at each step and the common final bank. As $M\to\infty$, with $n$ large
enough for the bank, the intended-state joint requirement satisfies
\begin{equation}
d_D(\boldsymbol a;\mathcal U,M,\delta)
=\Theta\!\left(\mathfrak D(\boldsymbol a;\mathcal U)\right).
\label{eq:d-step-width-law}
\end{equation}
In particular, the routing width and final-answer width obey
\begin{equation}
d_{\rm route}=\Theta\!\left(\max_{c<D}
\frac{\log(Dn/\delta)}{a_{\min,c}^2}\right),
\label{eq:transformer-routing-law}
\end{equation}
\begin{equation}
d_{\rm ans}=(2+o(1))\frac{\log M}{\Gamma^2}.
\label{eq:matching-dimension}
\end{equation}
For the cumulative or frontier updates in Section~\ref{sec:transformer},
there are finite attention scales and a constant $K_{D,\delta}$ such that
$d\ge K_{D,\delta}\mathfrak D(\boldsymbol a;\mathcal U)$ makes the
actual $D$-step computation and final answer correct with probability at
least $1-\delta$, for every $u\in\mathcal U$. On the same event,
\begin{equation}
\max_{c\le D}\norm{\widetilde q_c-q_{a_c}}
\le A_D\left(\sqrt{\frac{\log(16(D+1)/\delta)}d}+\zeta\right),
\label{eq:trace-state-error}
\end{equation}
where $\zeta$ bounds each destination-write error and can be made
arbitrarily small by finite attention scales. Constants may depend on the
fixed number $D$ of recurrent steps.
\end{theorem}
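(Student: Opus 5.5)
The proof splits into three parts: the intended-state width law (upper and lower bounds for routing and for the final answer), the stability of the actual recurrent computation, and a union bound that combines them.

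\textbf{Routing.} Fix a step $c$. Write $q_{a_c}=X_{a_c}/\norm{X_{a_c}}$ with $X_{a_c}=Ga_c$ and $\norm{a_c}=1$.

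For a source $v\in S_c$, decompose $g_v$ along $X_{a_c}$. The quantity $\inner{X_{a_c}}{g_v}$ has mean $a_{c,v}$. Conditionally on $X_{a_c}$, the orthogonal part of $g_v$ is a Gaussian with variance of order $1/d$. Also $\norm{X_{a_c}}=1+O(\sqrt{\log(1/\delta)/d})$ by $\chi^2$ concentration. Hence $\inner{q_{a_c}}{g_v}=a_{c,v}+O(\sqrt{\log(Dn/\delta)/d})$.

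For an inactive $s\in I_c$, $g_s$ is independent of $q_{a_c}$, so $\inner{q_{a_c}}{g_s}\sim N(0,1/d)$.

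Take a union bound over the $O(n)$ sources and $D$ steps. This shows that the threshold $\tau_c=3a_{\min,c}/8$ separates active from inactive sources once $d\ge K\log(Dn/\delta)/a_{\min,c}^2$; this is the upper bound in \eqref{eq:transformer-routing-law}. For the matching lower bound, look at the $\Theta(n)$ inactive keys. They are i.i.d.\ $N(0,1/d)$ given $q_{a_c}$, so their maximum exceeds $c'\sqrt{\log n/d}$ with high probability. This forces $a_{\min,c}\gtrsim\sqrt{\log(n/\delta)/d}$. The $\log(D/\delta)$ part of the rate follows from a Gaussian lower-tail bound on the weakest source $\inner{q}{g_{v^\ast}}$ at the step attaining the maximum.

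\textbf{Final answer.} Orthogonality of the bank gives $\inner{q_{a_D}}{X_{u_j}}$, $j\ge1$, as i.i.d.\ $N(0,1/d)$ independent of the signal. The useful score is $\inner{a_D}{u_0}+O(d^{-1/2})\ge\Gamma-O(d^{-1/2})$. The maximum of $M$ independent $N(0,1/d)$ variables is $(1+o(1))\sqrt{2\log M/d}$. Success therefore holds iff $\Gamma>(1+o(1))\sqrt{2\log M/d}$, which gives the sharp constant in \eqref{eq:matching-dimension}; the infimum over $u\in\mathcal U$ is attained by compactness. The joint requirement is the maximum of the two widths, and each event's failure probability is at most $\delta/2$, so \eqref{eq:d-step-width-law} follows.

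\textbf{Actual Transformer trace.} I would proceed by induction on $c$ with the hypothesis $\norm{\widetilde q_c-q_{a_c}}\le\epsilon_c$. On the routing event, the margin between $\tau_c$ and the scores is at least $a_{\min,c}/8$. If $\epsilon_c<a_{\min,c}/16$ (times $\max\norm{g}$, which is $O(1)$ w.h.p.), the perturbed scores keep the same active/inactive partition. With finite but large attention scales and fixed masks, the softmax then writes each deduplicated destination with error at most $\zeta$. The residual addition, with the prescribed counts setting weights, reproduces $Ga_{c+1}$ up to $O(\epsilon_c+\zeta)$. RMSNorm is Lipschitz near vectors of norm $\approx1$, and the norm concentration above contributes the $\sqrt{\log(16(D+1)/\delta)/d}$ term. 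The result is $\epsilon_{c+1}\le L(\epsilon_c+\zeta+\sqrt{\log(\cdot)/d})$, which unrolls to $A_D=O(L^D)$ in \eqref{eq:trace-state-error}. Choosing $K_{D,\delta}$ large keeps every $\epsilon_c$ below the margins, including the final margin, which is of order $\Gamma$ at width $K\mathfrak D$.

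\textbf{Main obstacle.} I expect the main obstacle to be this stability step. The actual $\widetilde q_c$ depends on $G$, so independence of the inactive keys from $\widetilde q_c$ is lost. I would avoid this by never using independence for $\widetilde q_c$: all probabilistic statements are made for the fixed intended states $q_{a_c}$, and the deterministic Lipschitz perturbation is transferred to the trace on the same event.
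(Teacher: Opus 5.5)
Your proposal is correct and follows the paper's proof essentially step for step. It uses the Gaussian-regression score law with a margin event and union bound for routing, and the $\chi_d$ and Gaussian-maximum phase transition with compactness for the final answer. It combines the two with a $\delta/2$ split. It then builds one high-probability event from the fixed intended states $q_{a_c}$, on which the gate, destination-write, residual and RMSNorm errors propagate deterministically through a geometric recursion.

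The only real variation is your routing lower bound. You use the fact that the pinned threshold $\tau_c=3a_{\min,c}/8$ must exceed the maximum of the $\Theta(n)$ inactive scores, whereas the paper requires the weakest source to beat those keys and invokes the single-comparison critical dimension. Both give $\Omega(\log n/a_{\min,c}^2)$. Your side claim that a single-step lower tail also yields the $\log D$ factor does not hold, since one step carries no dependence on $D$. Nothing rests on it, because $D$ and $\delta$ are fixed.
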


\begin{remark}
A state must support both the next attention call and the eventual answer.
The smallest retained coefficient controls the first requirement, while
alignment with the final query controls the second. The stability bound
shows how the same width scale also supports repeated Transformer updates:
writing and normalization errors stay below the attention gaps across the
whole trace, rather than being reset after each step.
\end{remark}

\section{Theoretical Validation of Cumulative State}
\label{sec:comparison}

We now compare complete $D$-step computations through the width scale in
Theorem~\ref{thm:transformer-realization}.  The comparison keeps the graph,
Transformer cell, and final candidate bank fixed; only the coefficients
carried by the thought token change.

\subsection{The $C^2$ guarantee for a known query class}
\label{sec:weighted}

The query condition in Section~\ref{sec:setup} is equivalently
$\mathcal U\subseteq\mathcal U_C(R)$, where
\begin{equation}
\mathcal U_C(R)
:=\left\{u\in\mathcal A_R:\norm{u}_\infty\le C/\sqrt{|R|}\right\};
\label{eq:full-nonlocalized-class}
\end{equation}
the class may be finite or continuous and need not treat history positions
symmetrically.

\begin{theorem}[Cumulative guarantee and best fixed weighting]
\label{thm:minimax}
Suppose every reached object remains eligible for later source-membership
attention.  Let $\mathcal U\subseteq\mathcal U_C(R_D)$ be any nonempty compact
query class, and let $\boldsymbol a_R=(a_{R_c})_{c=0}^D$ be the uniform
cumulative sequence.  Then
\begin{equation}
\mathfrak D(\boldsymbol a_R;\mathcal U)
\le C^2
\inf_{\substack{a_c\in\mathcal A_{R_c}\\0\le c\le D}}
\mathfrak D(\boldsymbol a;\mathcal U).
\label{eq:minimax-cumulative-approx}
\end{equation}
The factor $C^2$ for the width scale $\mathfrak D$ cannot in general be reduced.  If the realized query is not
known beyond the condition defining $\mathcal U_C(R_D)$, cumulative is the
best single weighting for the full allowed family:
\begin{equation}
\boldsymbol a_R
\in
\arg\min_{\substack{a_c\in\mathcal A_{R_c}\\0\le c\le D}}
\mathfrak D(\boldsymbol a;\mathcal U_C(R_D)).
\label{eq:minimax-cumulative}
\end{equation}
\end{theorem}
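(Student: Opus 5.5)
The plan is to work directly with the width scale $\mathfrak D$ of \eqref{eq:weighted-dimension} and bound its two ingredients separately: the routing terms $\log(Dn/\delta)/a_{\min,c}^2$ and the answer term $2\log M/\Gamma(a_D;\mathcal U)^2$. Since every reached object remains eligible, $S_c=R_c$ throughout. For any $a_c\in\mathcal A_{R_c}$ we have $a_{\min,c}\le 1/\sqrt{m_c}$ with $m_c=|R_c|$, because $\sum_{v\in R_c}a_{c,v}^2=1$ forces the smallest of the $m_c$ entries to satisfy $m_c a_{\min,c}^2\le 1$. The uniform state attains this bound with equality. Hence, for each $c<D$, the routing term of $\boldsymbol a_R$ is no larger than that of any competitor, with no factor lost.

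For the answer term, fix an arbitrary $\boldsymbol a$. For $u\in\mathcal U\subseteq\mathcal U_C(R_D)$, nonnegativity and unit norm give $\langle a_{R_D},u\rangle=\|u\|_1/\sqrt{m_D}\ge\|u\|_2^2/(\|u\|_\infty\sqrt{m_D})\ge 1/C$, using $\|u\|_1\ge\|u\|_2^2/\|u\|_\infty$. Therefore $\Gamma(a_{R_D};\mathcal U)\ge 1/C$. In the other direction, $\Gamma(a_D;\mathcal U)\le 1$ for every $a_D$ by Cauchy--Schwarz. So the cumulative answer term $2\log M/\Gamma^2$ is at most $C^2\cdot 2\log M\le C^2\cdot 2\log M/\Gamma(a_D;\mathcal U)^2$. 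Combining the two bounds termwise inside the max (with $C\ge1$ absorbing the routing terms) gives $\mathfrak D(\boldsymbol a_R;\mathcal U)\le C^2\mathfrak D(\boldsymbol a;\mathcal U)$ for every $\boldsymbol a$, and taking the infimum yields \eqref{eq:minimax-cumulative-approx}.

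For tightness I would exhibit an instance. Take $\mathcal U=\{u\}$ with $u$ uniform on a subset $T\subseteq R_D$ of size $m_D/C^2$ (with $C^2$ dividing $m_D$), so $\|u\|_\infty=C/\sqrt{m_D}$. Choose $\log M$ large relative to $\log(Dn/\delta)\,m_c$, so the answer term dominates. Then cumulative has $\Gamma=1/C$, while the competitor $a_D=u$ has $\Gamma=1$ and routing terms of order $m_D C^2\log(Dn/\delta)$, which are still dominated for large $M$; this is consistent with the regime $M\to\infty$ of Theorem~\ref{thm:transformer-realization}. The ratio tends to $C^2$. Setting the competitor's earlier steps to uniform keeps its routing terms equal to those of cumulative.

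For \eqref{eq:minimax-cumulative}, the routing terms are already minimized by uniform weights, so it remains to show that $a_{R_D}$ maximizes $\Gamma(\cdot;\mathcal U_C(R_D))$ over $\mathcal A_{R_D}$. The main step is to show that for every $a$, $\inf_{u\in\mathcal U_C}\langle a,u\rangle\le 1/C$, the value attained by $a_{R_D}$. I would first try averaging over the symmetry group: the class $\mathcal U_C(R_D)$ is permutation invariant, so $\inf_u\langle a,u\rangle\le\inf_u\langle\bar a,u\rangle$ fails to transfer directly. Instead I would construct an explicit bad query. Sort the entries of $a$ and let $u$ put weight $C/\sqrt{m_D}$ on the $m_D/C^2$ smallest coordinates of $a$. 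Then $\langle a,u\rangle=(C/\sqrt{m_D})\sum_{\text{smallest }m_D/C^2}a_v$. By a rearrangement and power-mean argument, the sum of the smallest $m_D/C^2$ entries of a unit vector is at most $(m_D/C^2)/\sqrt{m_D}$. This gives $\langle a,u\rangle\le 1/C$. When $m_D/C^2$ is not an integer, a fractional last coordinate handles it. This last step is the main obstacle, together with verifying compactness so the infimum is attained.
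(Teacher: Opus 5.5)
\textbf{Verdict: there is a genuine gap in the robust-optimality step \eqref{eq:minimax-cumulative}.} Your proof of \eqref{eq:minimax-cumulative-approx} is correct and matches the paper's argument: uniform weights maximize each $a_{\min,c}$, and comparing $\Gamma(a_{R_D};\mathcal U)\ge 1/C$ with $\Gamma(a_D;\mathcal U)\le 1$ handles the readout term.

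\textbf{The gap.} You take $1/C$ to be the value attained by $a_{R_D}$ and set out to show $\Gamma(a;\mathcal U_C(R_D))\le 1/C$ for every $a$. That is true only when $s:=m_D/C^2$ is an integer. In general $\Gamma(a_{R_D};\mathcal U_C(R_D))=\min_u\norm{u}_1/\sqrt{m_D}$. Equality in $\norm{u}_1\ge\norm{u}_2^2/\norm{u}_\infty\ge\sqrt{m_D}/C$ requires exactly $s$ coordinates at the cap. When $s\notin\mathbb Z$, the minimizer instead has $\lfloor s\rfloor$ capped coordinates and one residual coordinate $(C/\sqrt{m_D})\sqrt{s-\lfloor s\rfloor}$. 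Hence $\Gamma(a_{R_D};\mathcal U_C(R_D))=(\lfloor s\rfloor+\sqrt{s-\lfloor s\rfloor})/(sC)>1/C$. For $m_D=3$, $C^2=2$, this is about $0.805$ versus $0.707$. Your target inequality therefore fails already for $a=a_{R_D}$. No ``fractional last coordinate'' can rescue it, because every admissible $u$ then has $\inner{a_{R_D}}{u}>1/C$. Since $C\ge1$ is an arbitrary real in the theorem, this case is not optional.

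\textbf{The fix.} The missing step is the symmetrization you set aside, applied to the query rather than to $a$. Let $u_\star$ minimize $\norm{u}_1$ over the compact, permutation-invariant class $\mathcal U_C(R_D)$. Then for every $a\in\mathcal A_{R_D}$,
\[
\Gamma(a;\mathcal U_C(R_D))\le\frac{1}{m_D!}\sum_P\inner{a}{Pu_\star}=\frac{\norm{a}_1\norm{u_\star}_1}{m_D}\le\frac{\norm{u_\star}_1}{\sqrt{m_D}}=\Gamma(a_{R_D};\mathcal U_C(R_D)).
\]
This is exactly the paper's proof, and it needs no integrality. Symmetrizing $a$ also transfers, contrary to your remark, because $a\mapsto\Gamma(a;\mathcal U_C(R_D))$ is concave and permutation invariant. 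Your anti-sorted placement of $u_\star$ also works, by Chebyshev's sum inequality, once you compare it with this average instead of with $1/C$.

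\textbf{Two smaller corrections to the sharpness example.}
\begin{enumerate}
\item The routing maximum in \eqref{eq:weighted-dimension} runs only over $c<D$, so $a_D=u$ contributes no routing term at all. Had $c=D$ been included with $S_D=R_D$, the competitor $a_D=u$ would have infinite width, since it vanishes outside $T$ when $C>1$.
\item You cannot make $\log M$ large independently of $\log(Dn/\delta)$. The common bank needs $M\le n-m_D$, which gives $2\log M<2\log(Dn/\delta)$. So domination of the readout term forces $|R_c|=1$ for all $c<D$. That is why the paper instantiates the example with $D=1$, $|R_0|=1$, and $M$ proportional to $n$.
\end{enumerate}
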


\begin{remark}
The comparison allows every nonnegative coefficient sequence, including
sequences that a scalar residual update cannot generate. Equal cumulative
weights still lose at most $C^2$ against the best such choice for a known
query class, and minimize the full-family objective when the query is
unknown. The same inequalities hold over any smaller comparison family
containing the cumulative sequence. Together with
Theorem~\ref{thm:transformer-realization}, this gives an actual cumulative
implementation at width at most $K_{D,\delta}C^2\inf_{\boldsymbol a}
\mathfrak D(\boldsymbol a;\mathcal U)$: the optimal weight comparison and
the stability of its Transformer implementation are both controlled.
\end{remark}

\subsection{Why preserving history can reduce width}
\label{sec:mechanism}

The query condition in this subsection reads
\begin{equation}
\norm{u}_\infty\le\frac{C}{\sqrt m},
\qquad u\in\mathcal U.
\label{eq:query-nonlocalization}
\end{equation}

\paragraph{Tasks covered by the condition.}
Since $\norm{u}_2=1$, a fixed $C$
means that every query draws on at least $m/C^2$ reached objects.  Uniform
history readouts have $C=1$, while bounded-imbalance aggregation over a
constant fraction has $C=O(1)$.  This covers whole-set aggregation, additive
objectives over visited objects, and supervision over algorithmic traces
\citep{zaheer2017deepsets,lee2019settransformer,kool2019routing,
velickovic2022clrs,lightman2024verify}.  The regime $k=o(m)$ includes chains,
fixed-width traces, and grid-like breadth-first search: the current frontier
grows more slowly than the latent states back into computation.

\begin{theorem}[When cumulative reduces the width of the complete trace]
\label{thm:uniform-separation}
Fix $F\subsetneq R$, with $|F|=k<m=|R|$, and use the same candidate bank for
both states.  Let $\mathcal U\subseteq\mathcal A_R$ be any nonempty compact
query class satisfying~\eqref{eq:query-nonlocalization}.  Then
\begin{equation}
\Gamma(a_R;\mathcal U)\ge\frac1C,
\qquad
\Gamma(a_F;\mathcal U)\le C\sqrt{\frac{k}{m}},
\label{eq:nonlocalized-alignment-gap}
\end{equation}
Let
\begin{equation}
T_R:=\frac{2\log M}{\Gamma(a_R;\mathcal U)^2},
\qquad
T_F:=\frac{2\log M}{\Gamma(a_F;\mathcal U)^2}
\label{eq:terminal-width-terms}
\end{equation}
be the final-answer terms of the two $D$-step computations.  Whenever
$\Gamma(a_F;\mathcal U)>0$,
\begin{equation}
\frac{T_R}{T_F}
\le C^4\frac{k}{m}.
\label{eq:uniform-separation}
\end{equation}
Writing $R_R,R_F$ for their routing terms in
\eqref{eq:weighted-dimension}, their complete width scales are
$\max\{R_R,T_R\}$ and $\max\{R_F,T_F\}$.  Hence the same ratio describes the
end-to-end dimensions whenever final answer selection is the active
bottleneck for both policies.  For fixed $C$ it then vanishes in the
growing-history regime $k=o(m)$.  The rate $C^4k/m$ is sharp: it is
attained when $k\le m/C^2$ and $m/C^2$ is an integer, and asymptotically
attained along fixed-$C$ growing-history sequences with $m/C^2\to\infty$.  If
$\Gamma(a_F;\mathcal U)=0$, frontier's final head cannot exceed chance at any
finite dimension while cumulative still can.
\end{theorem}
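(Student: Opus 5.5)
The plan is to work entirely with the population alignments $\inner{a}{u}$ for nonnegative unit vectors, since the final-answer terms $T_R,T_F$ depend only on $\Gamma(\cdot;\mathcal U)$.

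\emph{Cumulative lower bound.} For $u\in\mathcal U$, $u\ge0$ is supported on $R$ with $\norm{u}_2=1$ and $\norm{u}_\infty\le C/\sqrt m$. Then
\[
\inner{a_R}{u}=\frac{\norm{u}_1}{\sqrt m}\ge\frac{\norm{u}_2^2}{\sqrt m\,\norm{u}_\infty}\ge\frac1C,
\]
using $\norm{u}_2^2\le\norm{u}_\infty\norm{u}_1$. Taking the infimum over the compact class $\mathcal U$ gives $\Gamma(a_R;\mathcal U)\ge 1/C$.

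\emph{Frontier upper bound.} For any $u\in\mathcal U$,
\[
\inner{a_F}{u}=\frac{1}{\sqrt k}\sum_{v\in F}u_v\le\frac{k}{\sqrt k}\cdot\frac{C}{\sqrt m}=C\sqrt{k/m}.
\]
Hence the infimum is bounded by the same quantity. Substituting the two bounds into the definitions of $T_R,T_F$ gives
\[
\frac{T_R}{T_F}=\frac{\Gamma(a_F)^2}{\Gamma(a_R)^2}\le C^2\,\frac{k}{m}\cdot C^2,
\]
which is \eqref{eq:uniform-separation}. The statements about complete width scales are then immediate: if the terminal terms dominate both maxima in \eqref{eq:weighted-dimension}, the ratio of the maxima equals $T_R/T_F$, and $C^4k/m\to0$ when $k=o(m)$ with $C$ fixed.

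\emph{Sharpness.} This is the step needing a construction; I expect it to be the main obstacle. Let $s=m/C^2$ be an integer with $k\le s$. Choose $S\subseteq R$ with $|S|=s$ and $F\subseteq S$, and take $\mathcal U=\{u\}$ with $u=\mathbf 1_S/\sqrt s$. This $u$ is admissible because $\norm{u}_\infty=C/\sqrt m$.

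The two alignments are
\[
\inner{a_R}{u}=\frac{s}{\sqrt{ms}}=\sqrt{s/m}=\frac1C,
\qquad
\inner{a_F}{u}=\frac{k}{\sqrt{ks}}=\sqrt{k/s}=C\sqrt{k/m},
\]
so both bounds hold with equality and the ratio equals $C^4k/m$.

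The asymptotic statement needs separate handling when $m/C^2$ is not an integer. Use $s=\lfloor m/C^2\rfloor$ and put $\mathbf 1_S/\sqrt s$ (its sup norm is slightly above $C/\sqrt m$), or mix in a small uniform mass on the rest of $R$ so the sup-norm constraint holds exactly. Then check that the ratio is $C^4(k/m)(1+o(1))$ as $m/C^2\to\infty$. This is a routine computation, provided one checks that the perturbation only changes $\norm{u}_\infty$ and the alignments by $1+o(1)$ factors.

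\emph{Degenerate case.} If $\Gamma(a_F;\mathcal U)=0$, compactness gives some $u\in\mathcal U$ with $\inner{a_F}{u}=0$. Under the isotropic-bank assumption, the useful score $\inner{q_{a_F}}{X_{u}}$ and the competing scores $\inner{q_{a_F}}{X_{u_j}}$ are then exchangeable. In fact, conditional on $X_{a_F}$, the vectors $Gu,Gu_1,\dots,Gu_M$ are i.i.d.\ Gaussians independent of $X_{a_F}$ because $u,u_j\perp a_F$. So $\Pr(\widehat j=0)=1/(M+1)$ at every $d$: chance level. Meanwhile $\Gamma(a_R)\ge1/C>0$, so Theorem~\ref{thm:transformer-realization} gives cumulative success at finite width $d_{\rm ans}=(2+o(1))C^2\log M$.
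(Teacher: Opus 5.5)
Your proposal is correct and follows essentially the same route as the paper. It uses the bound $\norm{u}_2^2\le\norm{u}_\infty\norm{u}_1$ for cumulative, the bound $\sqrt{k}\,\norm{u}_\infty$ for frontier, the witness $u=\mathbf 1_S/\sqrt{s}$ with $F\subseteq S$ and $|S|=m/C^2$, and exchangeability with the decoys when frontier alignment vanishes. The differences are minor: you read $T_R/T_F$ directly off the definitions rather than through the critical-dimension law, and for non-integral $m/C^2$ you spread the residual mass uniformly rather than placing it on one coordinate. Two small fixes: the plain $\mathbf 1_S/\sqrt{\lfloor m/C^2\rfloor}$ option is inadmissible, as you noticed, and cumulative's answer width is at most, not equal to, $(2+o(1))C^2\log M$.
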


\begin{remark}
This theorem identifies when the end-to-end gain comes from the final use of
state.  As the reached history grows, cumulative strengthens the correct
answer logit even though it stores more objects.  Equation~\eqref{eq:weighted-dimension}
also shows when intermediate routing, rather than final readout, sets the
required width; the $C^2$ result above covers both bottlenecks simultaneously.
\end{remark}

For the representative history-wide direction $u=a_R$, the final terms are
exactly
\begin{equation}
T_F=\left(2\frac{m}{k}+o\!\left(\frac{m}{k}\right)\right)\log M,
\qquad
T_R=(2+o(1))\log M.
\label{eq:mk-example}
\end{equation}
Thus the cumulative final state uses a factor $m/k$ less width for this
history-wide computation; Theorem~\ref{thm:transformer-realization} combines
this gain with the cost of producing all preceding states.

The reason is coherent addition.  Readout succeeds when
\begin{equation}
\underbrace{\inner a u\sqrt d}_{\text{useful signal}}
\gtrsim
\underbrace{\sqrt{2\log M}}_{\text{unrelated competition}}.
\label{eq:signal-noise}
\end{equation}
For cumulative, $\inner{a_R}{u}=\norm{u}_1/\sqrt m$: useful historical
components add before normalization, while unrelated candidates remain
centered fluctuations.  The extra stored objects therefore strengthen a
history-wide computation instead of acting as independent noise.  This is the
mathematical source of the apparent reversal.

\subsection{How the same Transformer writes frontier or cumulative}
\label{sec:transformer}

Each recurrent call receives the thought, a threshold token, and copied
source--destination buffers. The fixed interface supplies one active edge
per newly reached destination in $N_c$, removes already reached destinations,
and supplies $m_c=|R_c|$ and $r_c=|N_c|$. Thus destinations are counted once;
every active source lies in $S_c$ and every inactive source lies in $I_c$.
These masks, copies, and counts are inputs to the construction. We consider
$r_c\ge1$; a trace stops when no new destination remains.

For an edge $e=(s,t)$, the first block uses
\begin{equation}
Q_1h_e=[g_s;1],\qquad K_1h_c=[\widetilde q_c;0],\qquad
K_1h_{\tau_c}=[0;\tau_c].
\label{eq:block1-qk}
\end{equation}
The edge attends only to the thought and threshold tokens. Their values
$1$ and $0$ give the gate
\begin{equation}
z_e=\sigma\!\left(\beta(\inner{\widetilde q_c}{g_s}-\tau_c)\right).
\label{eq:block1-gate}
\end{equation}
The second block reads the gate and copies destination values:
\begin{equation}
Q_2h_{c+1}=1,\qquad K_2h_e=T(2z_e-1),\qquad V_2h_e=g_t.
\label{eq:block2-qkv}
\end{equation}
After multiplying its attention average by $\sqrt{r_c}$, its output satisfies
\begin{equation}
\widetilde o_c=\frac1{\sqrt{r_c}}\sum_{t\in N_c}g_t+\xi_c,
\qquad \norm{\xi_c}\le\zeta.
\label{eq:destination-write}
\end{equation}
A masked copy head returns $\widetilde q_c$. Prescribed scalar gates
$\lambda_c,\eta_c$ scale the copy and destination branches; a fixed output
projection combines them with the residual. RMSNorm, with gain chosen to
return unit state norm, gives
\begin{equation}
\widetilde q_{c+1}
=\operatorname{RMSNorm}(\lambda_c\widetilde q_c+\eta_c\widetilde o_c).
\label{eq:transformer-update}
\end{equation}
Here $d$ is the node/key channel width; a constant number of buffer channels
uses $O(d)$ hidden width. The scalar controls include the supplied counts.
The two choices are
\begin{equation}
(\lambda_c,\eta_c)=(0,1)\quad\text{(frontier)},\qquad
(\lambda_c,\eta_c)=(\sqrt{m_c},\sqrt{r_c})\quad\text{(cumulative)}.
\label{eq:two-updates}
\end{equation}
Starting from the normalized root state, the latter approximates uniform
cumulative states with the error in~\eqref{eq:trace-state-error}. Other scalar
gates produce weights by arrival batch: they rescale all retained
coefficients together and assign a common coefficient to the new batch.
The comparison in Theorem~\ref{thm:minimax} allows the larger family of all
nonnegative weight sequences. Cumulative's guarantee therefore holds even
against choices beyond those generated by these scalar gates.

\section{Experimental Evidence}
\label{sec:experiments}

We test the cumulative-superposition hypothesis at the same three levels as
the theory. Within every comparison, the task, candidates, embedding codebook, and trained
attention mechanism are shared; only the imposed state coefficients change.

\subsection{Test I: Does Cumulative Need Less Width?}
\label{sec:exp-e1}

We test Theorem~\ref{thm:uniform-separation} on a grid over
$m\in\{64,128,256,512\}$, $k/m\in\{1/32,\ldots,1/2\}$, $C\in\{1,1.25,1.5,2\}$,
and $M\in\{2^{10},2^{14}\}$, estimating worst-case $D_\star$ by binary search
on $d$ with $2000$ Monte Carlo trials ($\delta=0.1$).
Each query is drawn independently of $G$ and projected onto the cap
$\norm u_\infty\le C/\sqrt m$.
\emph{Dense} uses random positive weights on all $m$ coordinates, modeling
uneven aggregation over a visited set \citep{zaheer2017deepsets}.
\emph{Temporal} centers an exponential bump along the trace, modeling
localized process supervision over a CoT \citep{lightman2024verify}.
\emph{Block} partitions the trace into segments with different scales,
modeling staged algorithmic readouts \citep{velickovic2022clrs}.
\emph{Union} mixes the three generators.
The \emph{sharp witness} is the theorem's example, uniform on
$\lceil m/C^2\rceil$ coordinates that contain $F$.
\emph{Zero-overlap} is uniform on $R\setminus F$, so $\Gamma(a_F;\mathcal U)=0$.

\begin{figure}[htbp]
\centering
\includegraphics[width=\linewidth]{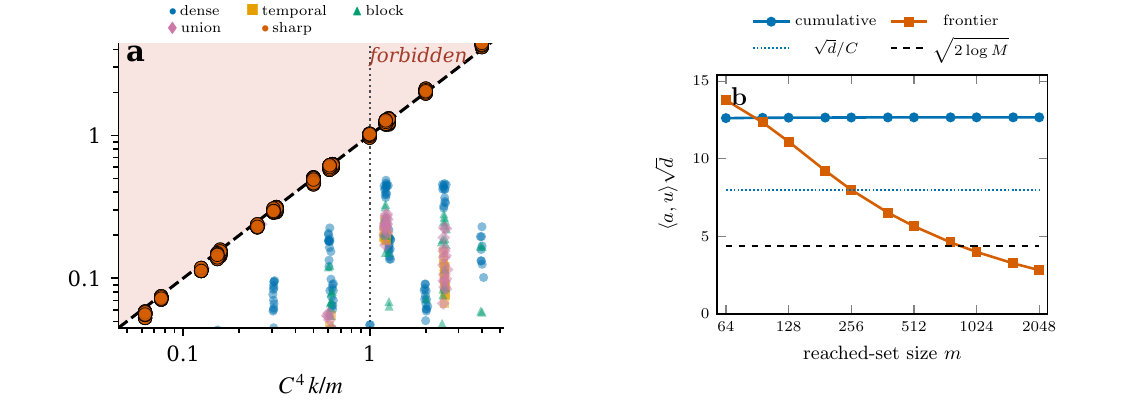}
\caption{\textbf{Why remembering more can require less width.}
(a) Across many ways of using the reached history state, cumulative state stays below the
predicted width bound, while the deliberately hardest examples approach it.
(b) As the reached history grows, cumulative keeps useful signal for the final task,
whereas frontier discards earlier states and becomes harder to use.}
\label{fig:e1}
\end{figure}

\paragraph{Protocol.}
We use the ProsQA\citep{zhong2022proqa} graph-state task and hold the graph trace, candidate bank,
and Gaussian codebook fixed when comparing state constructions.  Queries are
drawn before the Gaussian map. The controlled attention experiment uses
an identity-regularized source-membership router at $d=192$ and evaluates five
independent Gaussian codebooks.

\subsection{Test II: Do Trained Transformers Show the Same Advantage?}
\label{sec:exp-cumulative}
Across the paired five-seed sweeps, cumulative crosses the same 90\% held-out
success threshold at substantially lower width on both architectures.  Its
interpolated minimum reliable width is $40.3\pm1.1$, compared with
$110.6\pm1.6$ for frontier, in the lightweight two-block model; with the
two-layer GPT-2 backbone it is $34.4\pm3.8$, compared with
$66.2\pm5.9$.  The corresponding cumulative/frontier ratios are
$0.365\pm0.008$ and $0.521\pm0.067$.  Thus the separation predicted by
Theorem~\ref{thm:uniform-separation} persists when the analytic state construction is replaced by a trained causal
Transformer.  Figure~\ref{fig:frontier-cumulative} shows both width sweeps and
their paired minimum-width estimates under matched evaluation protocols.

\begin{figure*}[t]
  \centering
  \begin{minipage}[t]{0.45\textwidth}
    \vspace{0pt}
    \centering
    \includegraphics[width=\linewidth]{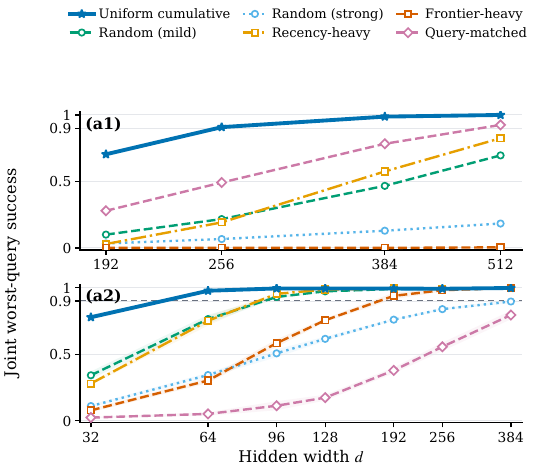}
  \end{minipage}\hfill
  \begin{minipage}[t][0.355\textwidth][s]{0.53\textwidth}
    \vspace{1.2mm}
    \centering
    \textbf{(b)\quad Paired five-seed results}
    \vspace{0.8mm}

    \scriptsize
    \setlength{\tabcolsep}{4.0pt}
    \renewcommand{\arraystretch}{1.12}
    \resizebox{\linewidth}{!}{%
    \begin{tabular}{lccc}
      \toprule
      \multicolumn{4}{l}{\textit{Lightweight two-block model}} \\
      State & Acc. @ 32 & Acc. @ 40 & Empirical $d_{0.9}$ \\
      \midrule
      Frontier & $0.210\pm0.007$ & $0.322\pm0.005$ & $110.6\pm1.6$ \\
      \textbf{Cumulative} & $\mathbf{0.793\pm0.015}$ & $\mathbf{0.899\pm0.010}$ & $\mathbf{40.3\pm1.1}$ \\
      Ideal frontier & $0.216\pm0.007$ & $0.328\pm0.006$ & $112$ \\
      Ideal cumulative & $0.947\pm0.005$ & $0.990\pm0.002$ & $32$ \\
      \midrule
      \multicolumn{4}{l}{\textit{GPT-2 backbone}} \\
      State & Acc. @ 32 & Acc. @ 64 & Empirical $d_{0.9}$ \\
      \midrule
      Frontier & $0.468\pm0.038$ & $0.900\pm0.026$ & $66.2\pm5.9$ \\
      \textbf{Cumulative} & $\mathbf{0.898\pm0.024}$ & $\mathbf{0.998\pm0.003}$ & $\mathbf{34.4\pm3.8}$ \\
      \bottomrule
    \end{tabular}%
    }

    \vfill
    \setlength{\tabcolsep}{5.2pt}
    \resizebox{\linewidth}{!}{%
    \begin{tabular}{lcc}
      \toprule
      Width ratio $d^{\rm C}_{0.9}/d^{\rm F}_{0.9}$ $\downarrow$
        & Two-layer & GPT-2 \\
      \midrule
      Paired estimate & $0.365\pm0.008$ & $0.521\pm0.067$ \\
      \bottomrule
    \end{tabular}%
    }

  \end{minipage}
  \caption{\textbf{Trained Transformers reproduce the predicted width advantage.}
The shared legend shows six ways to weight the same reached history.
(a1) Results for the two-layer block model; (a2) results for the GPT-2-block
model, with bands showing variation across runs. Every curve in a panel uses the
same trained attention module, and equal cumulative weighting reaches $90\%$
accuracy on the hardest tested task first. (b) Five paired runs confirm that
cumulative reaches the same held-out accuracy at a smaller width than frontier in
both architectures. Here $d_{0.9}$ means the smallest interpolated width reaching
$90\%$ accuracy; Ideal supplies the intended state directly.}
  \label{fig:frontier-cumulative}
\end{figure*}

\subsection{Test III: What Fails Under Unequal Weights?}
\label{sec:exp-weighted}

We next test the mechanism behind Theorem~\ref{thm:minimax}. Figure~\ref{fig:weighted-mechanism}(a) shows the predicted two-term
geometry: uniform cumulative is the joint lower-left point after normalizing
routing and worst-query readout width by the uniform construction.  Panels
(b,c) expose what this optimum means inside the router.  Uniform weights
equalize the coefficient of every reusable state token; recency and random
weights produce low-coefficient tokens, and those tokens receive lower
membership margins.  Consequently, joint route success falls from $0.906$ to
$0.586$ and $0.537$, while the per-inactive-key false-positive rate rises from
$0.03\%$ to $0.57\%$ and $1.31\%$, respectively.

\begin{figure*}[t]
  \centering
  \includegraphics[width=\textwidth]{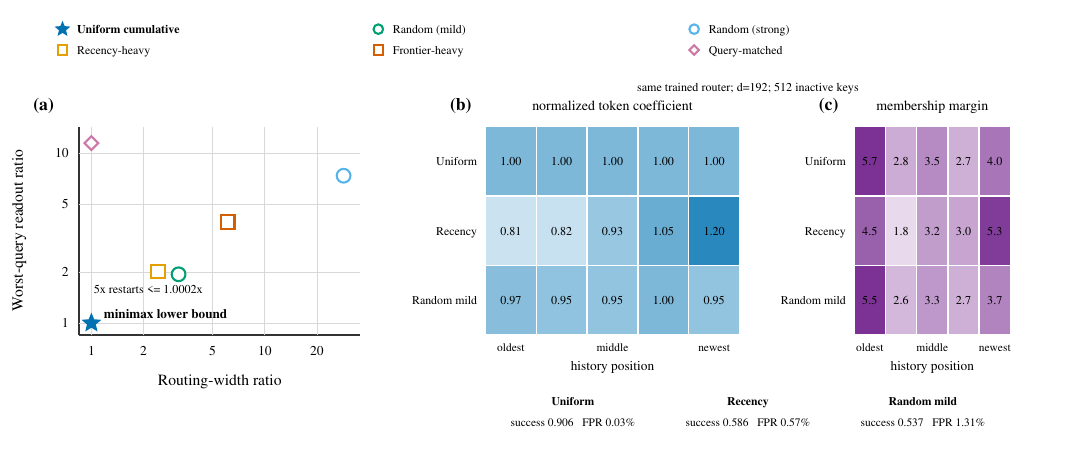}
  \caption{\textbf{Why equal cumulative weights are safest.}
(a) Each point compares the width needed to keep states usable during reasoning and
the width needed to use them for the final answer; lower is better.
(b) The heatmap shows how much state is assigned to each position in the reached
history. (c) Using the same trained attention module, states with small weights are
the first to be missed. The summary reports success on the whole routing step and
the rate at which inactive states are selected by mistake.}
  \label{fig:weighted-mechanism}
\end{figure*}

\section{Conclusion}
\label{sec:conclusion}

We asked how reasoning should be organized inside a Transformer's latent
space. Our answer is the cumulative-superposition hypothesis: when past
reasoning may be useful again, the latent state should preserve it with equal
weights instead of keeping only the latest frontier.

The theory explains why this can help. Useful parts of the history reinforce
one another in later attention, so cumulative state can reach reliable choices
at a smaller hidden width than frontier state. This advantage can survive
multi-step Transformer computation: attention finds relevant states, values
carry new information, the residual keeps earlier information, and the
feed-forward layer restores equal weights after each write. Finally, when the
model does not know which part of state a future task will use, equal weights
give the strongest guarantee; when the task family is known, they remain within
the explicit $C^2$ factor of the best specialized weighting.

The experiments match these predictions. Both the lightweight and GPT-2-block
Transformers reach the same accuracy at smaller width with cumulative state.
The position-level tests also show the predicted mechanism: unequally weighted states
fail first where they assign the least weight.

Together, the results turn cumulative superposition into a concrete design
principle for continuous reasoning: preserve the reached states evenly and
restore that balance as computation proceeds. This allows a fixed hidden
width to support longer reusable histories and more reliable future
computation.

\newpage
\subsection*{AI use statement}

Generative AI tools were only used for grammar correction and writing optimization, as well as generating some chart-drawing code, and all AI-assisted content was carefully reviewed by the author. The author takes full responsibility for the final content of this work, including its statements, proofs, experiments, citations, charts, and text.

\bibliography{iclr2027_conference}

@inproceedings{zhong2022proqa,
    title = "{P}ro{QA}: Structural Prompt-based Pre-training for Unified Question Answering",
    author = "Zhong, Wanjun  and
      Gao, Yifan  and
      Ding, Ning  and
      Qin, Yujia  and
      Liu, Zhiyuan  and
      Zhou, Ming  and
      Wang, Jiahai  and
      Yin, Jian  and
      Duan, Nan",
    editor = "Carpuat, Marine  and
      de Marneffe, Marie-Catherine  and
      Meza Ruiz, Ivan Vladimir",
    booktitle = "Proceedings of the 2022 Conference of the North American Chapter of the Association for Computational Linguistics: Human Language Technologies",
    month = jul,
    year = "2022",
    address = "Seattle, United States",
    publisher = "Association for Computational Linguistics",
    url = "https://aclanthology.org/2022.naacl-main.313/",
    doi = "10.18653/v1/2022.naacl-main.313",
    pages = "4230--4243",
}

@inproceedings{wei2022cot,
  title={Chain-of-Thought Prompting Elicits Reasoning in Large Language Models},
  author={Wei, Jason and Wang, Xuezhi and Schuurmans, Dale and Bosma, Maarten and Ichter, Brian and Xia, Fei and Chi, Ed H. and Le, Quoc V. and Zhou, Denny},
  booktitle={Advances in Neural Information Processing Systems},
  volume={35},
  year={2022}
}

@inproceedings{dehghani2019universal,
  title={Universal Transformers},
  author={Dehghani, Mostafa and Gouws, Stephan and Vinyals, Oriol and Uszkoreit, Jakob and Kaiser, Lukasz},
  booktitle={International Conference on Learning Representations},
  year={2019}
}

@inproceedings{giannou2023looped,
  title={Looped Transformers as Programmable Computers},
  author={Giannou, Angeliki and Rajput, Shashank and Sohn, Jy-Yong and Lee, Kangwook and Lee, Jason D. and Papailiopoulos, Dimitris},
  booktitle={Proceedings of the 40th International Conference on Machine Learning},
  series={Proceedings of Machine Learning Research},
  volume={202},
  pages={11398--11442},
  year={2023}
}

@inproceedings{bulatov2022recurrent,
  title={Recurrent Memory Transformer},
  author={Bulatov, Aydar and Kuratov, Yuri and Burtsev, Mikhail},
  booktitle={Advances in Neural Information Processing Systems},
  volume={35},
  year={2022}
}

@article{hao2024coconut,
  title={Training Large Language Models to Reason in a Continuous Latent Space},
  author={Hao, Shibo and Sukhbaatar, Sainbayar and Su, DiJia and Li, Xian and Hu, Zhiting and Weston, Jason and Tian, Yuandong},
  journal={arXiv preprint arXiv:2412.06769},
  year={2024}
}

@inproceedings{zhu2025reasoning,
  title={Reasoning by Superposition: A Theoretical Perspective on Chain of Continuous Thought},
  author={Zhu, Hanlin and Hao, Shibo and Hu, Zhiting and Jiao, Jiantao and Russell, Stuart and Tian, Yuandong},
  booktitle={Advances in Neural Information Processing Systems},
  year={2025},
  note={arXiv:2505.12514}
}

@article{zhu2025emergence,
  title={Emergence of Superposition: Unveiling the Training Dynamics of Chain of Continuous Thought},
  author={Zhu, Hanlin and Hao, Shibo and Hu, Zhiting and Jiao, Jiantao and Russell, Stuart and Tian, Yuandong},
  journal={arXiv preprint arXiv:2509.23365},
  year={2025}
}

@article{gozeten2025continuous,
  title={Continuous Chain of Thought Enables Parallel Exploration and Reasoning},
  author={Gozeten, Halil Alperen and Ildiz, M. Emrullah and Zhang, Xuechen and Harutyunyan, Hrayr and Rawat, Ankit Singh and Oymak, Samet},
  journal={arXiv preprint arXiv:2505.23648},
  year={2025}
}

@inproceedings{merrill2024cot,
  title={The Expressive Power of Transformers with Chain of Thought},
  author={Merrill, William and Sabharwal, Ashish},
  booktitle={International Conference on Learning Representations},
  year={2024}
}

@article{elhage2022toy,
  title={Toy Models of Superposition},
  author={Elhage, Nelson and Hume, Tristan and Olsson, Catherine and Schiefer, Nicholas and Henighan, Tom and Kravec, Shauna and Hatfield-Dodds, Zac and Lasenby, Robert and Drain, Dawn and Chen, Carol and others},
  journal={Transformer Circuits Thread},
  year={2022},
  url={https://transformer-circuits.pub/2022/toy_model/index.html}
}

@article{xu2024looped,
  title={On Expressive Power of Looped Transformers: Theoretical Analysis and Enhancement via Timestep Encoding},
  author={Xu, Kevin and Sato, Issei},
  journal={arXiv preprint arXiv:2410.01405},
  year={2024}
}

@article{fu2026fullylooped,
  title={Simply Stabilizing the Loop via Fully Looped Transformer},
  author={Fu, Rao and Yang, Zixuan and Zhang, Jiankun and Ma, Jing and Chen, Hechang and Li, Yu and Chang, Yi},
  journal={arXiv preprint arXiv:2605.18797},
  year={2026}
}

@article{yao2025implicitmultihop,
  title={Language models can learn implicit multi-hop reasoning, but only if they have lots of training data},
  author={Yao, Yuekun and Du, Yupei and Zhu, Dawei and Hahn, Michael and Koller, Alexander},
  journal={arXiv preprint arXiv:2505.17923},
  year={2025}
}

@article{li2024longcontextmultihop,
  title={Making Long-Context Language Models Better Multi-Hop Reasoners},
  author={Li, Yanyang and Liang, Shuo and Lyu, Michael R. and Wang, Liwei},
  journal={arXiv preprint arXiv:2408.03246},
  year={2024}
}

@article{yang2024bufferthoughts,
  title={Buffer of Thoughts: Thought-Augmented Reasoning with Large Language Models},
  author={Yang, Ling and Yu, Zhaochen and Zhang, Tianjun and Cao, Shiyi and Xu, Minkai and Zhang, Wentao and Gonzalez, Joseph E. and Cui, Bin},
  journal={arXiv preprint arXiv:2406.04271},
  year={2024}
}

@article{lin2026tyler,
  title={Tyler: Typed Latent Reasoning for Language Models -- When to Think, What to Compute, and How Much to Allocate},
  author={Lin, Hanyu and Cai, Min and Wen, Jiawei and Zhang, Haodi},
  journal={arXiv preprint arXiv:2606.16360},
  year={2026}
}

@article{browncohen2026opaque,
  title={Quantifying the Necessity of Chain of Thought through Opaque Serial Depth},
  author={Brown-Cohen, Jonah and Lindner, David and Shah, Rohin},
  journal={arXiv preprint arXiv:2603.09786},
  year={2026}
}

@article{hu2024hopfieldian,
  title={Understanding Reasoning in Chain-of-Thought from the Hopfieldian View},
  author={Hu, Lijie and Liu, Liang and Yang, Shu and Chen, Xin and Tan, Zhen and Ali, Muhammad Asif and Li, Mengdi and Wang, Di},
  journal={arXiv preprint arXiv:2410.03595},
  year={2024}
}

@article{foote2024polysemanticity,
  title={Tackling Polysemanticity with Neuron Embeddings},
  author={Foote, Alex},
  journal={arXiv preprint arXiv:2411.08166},
  year={2024}
}

@article{gupta2025polysemantic,
  title={Disentangling Polysemantic Neurons with a Null-Calibrated Polysemanticity Index and Causal Patch Interventions},
  author={Gupta, Manan and Kumar, Dhruv},
  journal={arXiv preprint arXiv:2508.16950},
  year={2025}
}

@article{bereska2025lossy,
  title={Superposition as Lossy Compression: Measure with Sparse Autoencoders and Connect to Adversarial Vulnerability},
  author={Bereska, Leonard and Tzifa-Kratira, Zoe and Samavi, Reza and Gavves, Efstratios},
  journal={arXiv preprint arXiv:2512.13568},
  year={2025}
}

@inproceedings{zaheer2017deepsets,
  title={Deep Sets},
  author={Zaheer, Manzil and Kottur, Satwik and Ravanbakhsh, Siamak and Poczos, Barnabas and Salakhutdinov, Ruslan and Smola, Alexander J.},
  booktitle={Advances in Neural Information Processing Systems},
  volume={30},
  year={2017}
}

@inproceedings{lee2019settransformer,
  title={Set Transformer: A Framework for Attention-based Permutation-Invariant Neural Networks},
  author={Lee, Juho and Lee, Yoonho and Kim, Jungtaek and Kosiorek, Adam and Choi, Seungjin and Teh, Yee Whye},
  booktitle={Proceedings of the 36th International Conference on Machine Learning},
  pages={3744--3753},
  year={2019}
}

@inproceedings{kool2019routing,
  title={Attention, Learn to Solve Routing Problems!},
  author={Kool, Wouter and van Hoof, Herke and Welling, Max},
  booktitle={International Conference on Learning Representations},
  year={2019}
}

@inproceedings{velickovic2022clrs,
  title={The {CLRS} Algorithmic Reasoning Benchmark},
  author={Veli{\v{c}}kovi{\'c}, Petar and Badia, Adri{\`a} Puigdom{\`e}nech and Budden, David and Pascanu, Razvan and Banino, Andrea and Dashevskiy, Misha and Hadsell, Raia and Blundell, Charles},
  booktitle={Proceedings of the 39th International Conference on Machine Learning},
  pages={22084--22102},
  year={2022}
}

@inproceedings{lightman2024verify,
  title={Let's Verify Step by Step},
  author={Lightman, Hunter and Kosaraju, Vineet and Burda, Yura and Edwards, Harri and Baker, Bowen and Lee, Teddy and Leike, Jan and Schulman, John and Sutskever, Ilya and Cobbe, Karl},
  booktitle={International Conference on Learning Representations},
  year={2024}
}
\bibliographystyle{iclr2027_conference}

\appendix

\section{Proofs for the Dimension Limit}
\label{app:dimension}

The complete-trace theorem uses the following local attention facts.  They are
stated here because an individual call is a proof component rather than the
object compared in the main text.

For this appendix only, let
\begin{equation}
d_\star(a,u_0;M,\delta)
:=\min\{d:\Pr(\widehat j(q_a)=0)\ge1-\delta\},
\label{eq:critical-dim}
\end{equation}
with $d_\star=\infty$ when $\mu(a,u_0)=0$ and
$1-\delta>1/(M+1)$.

\begin{lemma}[Exact distribution of one attention comparison]
\label{lem:exact-scores}
Fix unit $a,u_0$ with $\mu:=\inner a{u_0}\in[0,1]$, and assume
\eqref{eq:isotropic-competition}.  There exist mutually independent
$Z_0,Z_1,\ldots,Z_M\sim N(0,1)$ and $C_d\sim\chi_d$ such that
\begin{align}
\sqrt d\,\inner{q_a}{X_{u_0}}
&\overset{d}{=}
\mu C_d+\sqrt{1-\mu^2}\,Z_0,
\label{eq:true-score}\\
\sqrt d\,\inner{q_a}{X_{u_j}}
&\overset{d}{=}Z_j,
\qquad 1\le j\le M.
\label{eq:decoy-score}
\end{align}
Consequently,
\begin{equation}
\Pr\!\left(\alpha_0(q_a)=\max_{0\le j\le M}\alpha_j(q_a)\right)
=
\Pr\left(
\mu C_d+\sqrt{1-\mu^2}\,Z_0>\max_{1\le j\le M}Z_j
\right).
\label{eq:exact-success}
\end{equation}
\end{lemma}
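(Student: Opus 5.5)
The plan is to reduce everything to rotational invariance of the Gaussian matrix $G$. Complete $a,u_0$ to an orthonormal frame of $\R^n$ as follows. Write $u_0=\mu a+\sqrt{1-\mu^2}\,w$, where $w$ is the unit vector in $\operatorname{span}\{a,u_0\}$ orthogonal to $a$; if $\mu=1$, take $w$ to be any unit vector orthogonal to $a$ and to all $u_j$. By \eqref{eq:isotropic-competition}, the vectors $a,w,u_1,\ldots,u_M$ are orthonormal. Since $G$ has i.i.d.\ $N(0,1/d)$ entries, $GO$ has the same law for any orthogonal $O$. Hence
\[
Y:=\sqrt d\,Ga,\qquad W:=\sqrt d\,Gw,\qquad Y_j:=\sqrt d\,Gu_j\ (1\le j\le M)
\]
are mutually independent standard Gaussian vectors in $\R^d$.

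Next express each score in terms of these vectors. We have $q_a=Y/\norm Y$, so
\[
\sqrt d\,\inner{q_a}{X_{u_0}}=\mu\norm Y+\sqrt{1-\mu^2}\,\inner{Y/\norm Y}{W},
\qquad
\sqrt d\,\inner{q_a}{X_{u_j}}=\inner{Y/\norm Y}{Y_j}.
\]
The key step is to condition on $Y$. Given $Y$, the direction $Y/\norm Y$ is a fixed unit vector, so $Z_0:=\inner{Y/\norm Y}{W}$ and $Z_j:=\inner{Y/\norm Y}{Y_j}$ are i.i.d.\ $N(0,1)$, independent across $j$ by the independence of $W,Y_1,\ldots,Y_M$. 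Their conditional law does not depend on $Y$. Therefore $Z_0,\ldots,Z_M$ are independent of $Y$, and in particular of $C_d:=\norm Y\sim\chi_d$. This gives the joint identity in distribution \eqref{eq:true-score}--\eqref{eq:decoy-score}, with all $M+2$ variables mutually independent. The identity holds jointly, not merely marginally, because it comes from one explicit construction on a single probability space.

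Finally, \eqref{eq:exact-success} follows because softmax is strictly monotone in the logits for $\tau>0$, as already used in \eqref{eq:decoder}. The event $\{\alpha_0=\max_j\alpha_j\}$ therefore coincides with $\{\inner{q_a}{X_{u_0}}\ge\max_{j\ge1}\inner{q_a}{X_{u_j}}\}$. Multiplying by $\sqrt d>0$ and using that ties have probability zero (the $Z_j$ are continuous and independent), the non-strict inequality can be replaced by the strict one in the stated expression.

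The only delicate point is the independence of the $Z_j$ from $C_d$, which is the conditioning argument above. The degenerate cases $\mu\in\{0,1\}$ need the small choice of $w$ described at the start. The case $X_a=0$ has probability zero and can be ignored.
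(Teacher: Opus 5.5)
Your proof is correct and takes essentially the same route as the paper's. Both write $u_0=\mu a+\sqrt{1-\mu^2}\,w$ with $w\perp a$, use that $G$ sends orthonormal directions to independent Gaussian vectors, and condition on $Ga$ to get standard normals independent of the $\chi_d$ norm. Your conditioning and tie-handling steps are spelled out more carefully than the paper's, and there is only one small repair: for $\mu=1$ you need not choose an extra unit $w$ orthogonal to $a$ and all $u_j$, and you cannot when $n=M+1$. Since its coefficient vanishes, just adjoin an independent $Z_0\sim N(0,1)$ on an enlarged probability space.
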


\begin{lemma}[Monotonicity in alignment]
\label{lem:alignment-monotonicity}
For fixed $d\ge1$ and $M\ge1$, the probability in
\eqref{eq:exact-success} is strictly increasing in $\mu\in[0,1]$.
\end{lemma}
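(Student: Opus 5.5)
We must prove Lemma (Monotonicity in alignment): for fixed $d,M$, the probability $p(\mu):=\Pr(\mu C_d+\sqrt{1-\mu^2}Z_0>W)$, with $W:=\max_{1\le j\le M}Z_j$, is strictly increasing on $[0,1]$.

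The plan is to condition on $C_d$ and the competitor maximum and use a rotation representation of the true score. Set $\mu=\cos\theta$ with $\theta\in[0,\pi/2]$. The true score is then $S_\theta=\cos\theta\, C_d+\sin\theta\, Z_0$. Conditionally on $C_d=c>0$ and $W=w$, it is Gaussian with mean $c\cos\theta$ and variance $\sin^2\theta$. So
\[
p(\mu)=\E\left[\Phi\!\left(\frac{c\cos\theta-w}{\sin\theta}\right)\right],
\]
with the convention at $\theta=0$ that the integrand is $\mathbf 1\{c>w\}$. A pointwise-in-$(c,w)$ monotonicity argument fails: when $w>c$, increasing $\mu$ shrinks the variance and can lower the conditional probability. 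We must therefore integrate over $w$ (or $c$) first.

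\textbf{Main route: a geometric reformulation.} The pair $(C_d,Z_0)$ has a law with a clean geometric description. Let $Y\sim N(0,I_d)$ and $Y'\sim N(0,1)$ be independent, and define the vector $V=(Y,Y')\in\R^{d+1}$. Then
\[
C_d=\norm{Y}\quad\text{and}\quad \mu C_d+\sqrt{1-\mu^2}Z_0 \overset{d}{=}\inner{V}{w_\theta},
\]
but only after replacing $C_d$ by the projection onto a fixed direction. That substitution is not exact, so the reformulation should be used only as intuition.

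\textbf{Concrete plan: differentiate directly.} Write
\[
p(\mu)=\E\big[\,F_M(\mu C_d+\sqrt{1-\mu^2}Z_0)\,\big],
\]
where $F_M=\Phi^M$ is the CDF of $W$. Differentiating in $\mu$ on $(0,1)$ gives
\[
p'(\mu)=\E\Big[f_M(S)\Big(C_d-\frac{\mu}{\sqrt{1-\mu^2}}Z_0\Big)\Big],\qquad f_M=M\phi\Phi^{M-1}.
\]
Apply Gaussian integration by parts in $Z_0$:
\[
\E[Z_0\,f_M(S)]=\sqrt{1-\mu^2}\,\E[f_M'(S)].
\]
This yields
\[
p'(\mu)=\E\big[C_d f_M(S)\big]-\mu\,\E\big[f_M'(S)\big].
\]
Next, use the Stein-type identity for $C_d\sim\chi_d$, whose density is proportional to $c^{d-1}e^{-c^2/2}$. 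For smooth $g$ with $g(0)$ finite,
\[
\E[C_d\,g(C_d)]=\E[g'(C_d)]+(d-1)\,\E[g(C_d)/C_d],
\]
with the boundary term vanishing for $d\ge2$; for $d=1$ there is a boundary term $g(0)\cdot\text{const}>0$. Take $g(c)=f_M(\mu c+\sqrt{1-\mu^2}Z_0)$, so that $\E[g'(C_d)]=\mu\,\E[f_M'(S)]$. The derivative then collapses to
\[
p'(\mu)=(d-1)\,\E\big[f_M(S)/C_d\big]+\text{(boundary term when } d=1),
\]
and both pieces are strictly positive because $f_M>0$ everywhere. Continuity of $p$ at $\mu=0$ and $\mu=1$, by dominated convergence, then extends strict monotonicity to the closed interval $[0,1]$.

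\textbf{Main obstacle.} The delicate step is justifying the $\chi_d$ integration by parts. This requires checking integrability of $f_M(S)/C_d$, which holds since $\E[1/C_d]<\infty$ for $d\ge2$ and $f_M$ is bounded. It also requires handling $d=1$ separately: there $C_1=|N(0,1)|$, and the boundary term equals $\sqrt{2/\pi}\,\E[f_M(\sqrt{1-\mu^2}Z_0)]>0$. Everything else reduces to standard differentiation under the expectation, justified by bounded derivatives of $F_M$ together with Gaussian tails.
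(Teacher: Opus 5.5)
Your argument is correct. It uses the same engine as the paper's proof: differentiate in $\mu$ on $(0,1)$, integrate by parts against the $\chi_d$ density $c^{d-1}e^{-c^2/2}$, read positivity off the $(d-1)c^{d-2}$ term for $d\ge2$ or the boundary term at $c=0$ for $d=1$, and extend to $[0,1]$ by continuity. The organization differs, though.

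The paper works only with the CDF $F_\mu(t)=\Pr(S_\mu\le t)$ of the true score. Completing the square in the two Gaussian exponents gives the kernel $e^{-(c-\mu t)^2/(2\sigma^2)}$, and the paper shows $\partial_\mu F_\mu(t)<0$ for every $t$. That is strict first-order stochastic dominance of $S_\mu$, and only then does it compose with the strictly increasing conditional success probability $\Phi(s)^M$.

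You instead differentiate $\E[\Phi^M(S_\mu)]$ directly. You eliminate the $Z_0$ term by Gaussian integration by parts, which plays the role of completing the square, and then apply the $\chi_d$ Stein identity. This yields $p'(\mu)=(d-1)\E[f_M(S_\mu)/C_d]$ for $d\ge2$ and $p'(\mu)=\sqrt{2/\pi}\,\E[f_M(\sqrt{1-\mu^2}Z_0)]$ for $d=1$. Integrating the paper's $-\partial_\mu F_\mu(t)$ against $f_M(t)\,dt$ reproduces exactly these expressions, so the two computations are equivalent.

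Each route buys something different. The paper's ordering gives a statement about the true score alone. It therefore covers any number or distribution of independent competitors and needs no smoothness of the test function. Yours gives an explicit, quantitative formula for $p'$, at the cost of needing $F_M\in C^2$ with bounded $f_M$ and $f_M'$, which $\Phi^M$ supplies.

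The ``geometric reformulation'' paragraph is unused and, as you note, not exact; drop it.
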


\begin{theorem}[Local attention transition]
\label{thm:phase-transition}
If $M\to\infty$ and
$d_M\mu_M^2/(2\log M)\to\lambda$, then
\begin{equation}
\Pr\!\left(\alpha_0(q_a)=\max_{0\le j\le M}\alpha_j(q_a)\right)
\longrightarrow
\begin{cases}
0,&\lambda<1,\\
1,&\lambda>1.
\end{cases}
\label{eq:zero-one}
\end{equation}
\end{theorem}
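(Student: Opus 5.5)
By Lemma~\ref{lem:exact-scores}, the success probability equals $\Pr(\mu_M C_d+\sqrt{1-\mu_M^2}\,Z_0>W_M)$, where $W_M:=\max_{1\le j\le M}Z_j$ is independent of $(C_d,Z_0)$. The plan is to show that each of the three random quantities concentrates at its own scale, and that the comparison is then decided by the sign of $\sqrt{d_M}\mu_M-\sqrt{2\log M}$.

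\textbf{Step 1: the decoy maximum.} Standard Gaussian extreme-value bounds give $W_M/\sqrt{2\log M}\to1$ in probability. For the upper tail, a union bound gives $\Pr(W_M>(1+\epsilon)\sqrt{2\log M})\le M\cdot M^{-(1+\epsilon)^2}\to0$. For the lower tail, use independence: $\Pr(W_M<(1-\epsilon)\sqrt{2\log M})=(1-\bar\Phi)^M\le\exp(-M\bar\Phi)$, and $M\bar\Phi((1-\epsilon)\sqrt{2\log M})\ge M^{1-(1-\epsilon)^2-o(1)}\to\infty$.

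\textbf{Step 2: the useful score.} The term $\sqrt{1-\mu^2}\,Z_0$ is $O_p(1)$, which is $o_p(\sqrt{\log M})$. For the term $\mu_M C_d$, split into cases according to whether $d_M\to\infty$.
\begin{itemize}[leftmargin=*]
\item If $d_M\to\infty$ along a subsequence, then $C_d/\sqrt{d_M}\to1$ by $\chi$ concentration (e.g.\ $\Pr(|C_d-\sqrt d|>t)\le 2e^{-t^2/2}$). Hence $\mu_M C_d=\mu_M\sqrt{d_M}(1+o_p(1))=\sqrt{\lambda\cdot2\log M}\,(1+o_p(1))$.
\item If $d_M$ stays bounded along a subsequence, then $\mu_M\le1$ forces $d_M\mu_M^2$ to be bounded. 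Thus $\lambda=0$, and $\mu_M C_d=O_p(1)$.
\end{itemize}
In either case, $(\mu_M C_d+\sqrt{1-\mu_M^2}Z_0)/\sqrt{2\log M}\to\sqrt\lambda$ in probability. Passing to subsequences handles sequences that mix the two regimes.

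\textbf{Step 3: comparison.} Dividing both sides by $\sqrt{2\log M}$, the useful score converges to $\sqrt\lambda$ and the decoy maximum converges to $1$, both in probability. For $\lambda<1$, pick $\epsilon$ with $\sqrt\lambda+\epsilon<1-\epsilon$; the probability of success is then at most the sum of two vanishing tail probabilities, so it tends to $0$. For $\lambda>1$, the symmetric argument shows the probability tends to $1$. The case $\lambda=\infty$, if allowed, follows from monotonicity in $\mu$ (Lemma~\ref{lem:alignment-monotonicity}) combined with the trivial bound, or directly from the same argument.

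\textbf{Main obstacle.} The main subtlety is uniformity in the joint regime, namely making the $\chi_d$ concentration valid when $d_M$ and $\mu_M$ vary together. I would handle it with the explicit Lipschitz concentration bound for $C_d$ around $\sqrt{d}$, which holds uniformly in $d$. This reduces the bounded-$d$ case to the observation in Step 2 that it forces $\lambda=0$.
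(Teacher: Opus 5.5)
Your proposal is correct and follows the paper's proof essentially step for step. Both arguments divide the exact comparison by $\sqrt{2\log M}$, use $\max_j Z_j/\sqrt{2\log M}\to1$ and tightness of $Z_0$, show the aligned term tends to $\sqrt\lambda$, and then separate the two sides by a constant strictly between $\sqrt\lambda$ and $1$. The only difference is that the paper handles $\lambda=0$ with the second-moment identity $\E[(\mu_M C_{d_M})^2]=d_M\mu_M^2$ instead of your bounded/unbounded subsequence split, and it notes that $\lambda>0$ already forces $d_M\to\infty$, so the uniformity concern you flag never actually arises.
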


\begin{corollary}[Dimension of one attention comparison]
\label{cor:critical-dimension}
For fixed $\delta\in(0,1)$,
\begin{equation}
d_\star(a,u_0;M,\delta)
=(2+o(1))\frac{\log M}{\mu_M^2}.
\label{eq:single-head-dimension}
\end{equation}
\end{corollary}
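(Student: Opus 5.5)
The plan is to sandwich $d_\star$ between $(1\pm\varepsilon)\,2\log M/\mu_M^2$ for every fixed $\varepsilon\in(0,1)$, using Theorem~\ref{thm:phase-transition} along two explicit width sequences. Then let $\varepsilon\downarrow0$. Throughout we take $\mu_M>0$; otherwise $d_\star=\infty$ by definition and there is nothing to prove. Because $\mu_M\le1$, we have $2\log M/\mu_M^2\ge 2\log M\to\infty$, so rounding to integers changes any ratio by only a factor $1+o(1)$.

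\textbf{Monotonicity in $d$.} The definition of $d_\star$ is a minimum over all $d$, so it is not enough to show failure at a single width. We therefore first show that the success probability $p(d,\mu)$ in \eqref{eq:exact-success} is nondecreasing in $d$.
\begin{itemize}[leftmargin=*,itemsep=2pt]
\item By Lemma~\ref{lem:exact-scores}, $p(d,\mu)=\Pr(\mu C_d+\sqrt{1-\mu^2}Z_0>\max_{j\ge1}Z_j)$, where $C_d\sim\chi_d$ is independent of the Gaussians.
\item The variables $C_d$ are stochastically increasing in $d$: couple $C_{d+1}=\sqrt{C_d^2+W^2}\ge C_d$ with an independent standard Gaussian $W$.
\item The event is monotone in $C_d$ because $\mu\ge0$.
\end{itemize}
Hence $p(d,\mu)\le p(d+1,\mu)$, and $\{d:p(d,\mu)\ge1-\delta\}$ is an up-set of integers. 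Its minimum is therefore bracketed by any width that fails and any width that succeeds.

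\textbf{Upper bound.} Set $d_M^+:=\lceil(1+\varepsilon)\,2\log M/\mu_M^2\rceil$. Then $d_M^+\mu_M^2/(2\log M)\to1+\varepsilon>1$. Theorem~\ref{thm:phase-transition} gives $p(d_M^+,\mu_M)\to1$, which exceeds $1-\delta$ for all large $M$. Hence $d_\star\le d_M^+$ eventually.

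\textbf{Lower bound.} Set $d_M^-:=\lfloor(1-\varepsilon)\,2\log M/\mu_M^2\rfloor$. Then the ratio tends to $1-\varepsilon<1$, and Theorem~\ref{thm:phase-transition} gives $p(d_M^-,\mu_M)\to0<1-\delta$. By the monotonicity above, every $d\le d_M^-$ also fails, so $d_\star>d_M^-$ eventually.

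Combining the two bounds, $\limsup$ and $\liminf$ of $d_\star\mu_M^2/(2\log M)$ lie in $[1-\varepsilon,1+\varepsilon]$ for every $\varepsilon$. Hence the ratio tends to $1$, which is \eqref{eq:single-head-dimension}.

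The main obstacle is making sure Theorem~\ref{thm:phase-transition} really applies to the sequences $(d_M^\pm,\mu_M)$. It is stated for sequences with a limiting ratio $\lambda$, and our sequences have exactly that, including the case where $\mu_M$ varies with $M$. If the theorem were only available for fixed $\mu$, I would instead redo its Gaussian-maximum argument directly, using concentration $C_d=\sqrt d(1+o(1))$ and $\max_j Z_j=\sqrt{2\log M}(1+o(1))$. I would also have to check that the $\sqrt{1-\mu^2}Z_0$ term is $O_P(1)$, which is negligible against $\sqrt{2\log M}$.
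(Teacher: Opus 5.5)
Your proposal is correct and follows the paper's proof essentially step for step: the same coupling $C_{d+1}=\sqrt{C_d^2+W^2}$ gives monotonicity in $d$, and Theorem~\ref{thm:phase-transition} is applied at widths $(1\pm\varepsilon)\,2\log M/\mu_M^2$ before letting $\varepsilon\downarrow0$. Three of your steps make explicit what the paper leaves implicit: the integer rounding, justified by $2\log M/\mu_M^2\ge 2\log M\to\infty$; the separate $\mu_M=0$ case; and the use of monotonicity to rule out every $d\le d_M^-$.
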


\begin{proof}[Proof of Lemma~\ref{lem:exact-scores}]
Let $Y:=Ga$.  Since $\norm a_2=1$, $Y\sim N(0,I_d/d)$ and
$\sqrt d\norm Y\sim\chi_d$.

If $\mu<1$, define
\begin{equation}
c:=\frac{u_0-\mu a}{\sqrt{1-\mu^2}}.
\label{eq:cdef}
\end{equation}
Then $c$ is unit norm, $c\perp a$, and
$u_0=\mu a+\sqrt{1-\mu^2}c$.  If $\mu=1$, the term involving $c$ below is
absent.  Gaussianity and $c\perp a$ imply that $Gc$ is independent of $Y$.
Conditioned on $Y$, rotational invariance gives
\begin{equation}
\sqrt d\inner{Y/\norm Y}{Gc}\sim N(0,1),
\label{eq:conditional-c}
\end{equation}
and this random variable is independent of $\norm Y$.
Therefore
\begin{align}
\sqrt d\inner{q_a}{Gu_0}
&=
\mu\sqrt d\norm Y
+\sqrt{1-\mu^2}\sqrt d\inner{Y/\norm Y}{Gc}
\nonumber\\
&\overset d=
\mu C_d+\sqrt{1-\mu^2}Z_0.
\label{eq:proof-true}
\end{align}

For each $j\ge1$, $u_j\perp a$, so $Gu_j$ is independent of $Y$ and
\begin{equation}
\sqrt d\inner{Y/\norm Y}{Gu_j}\sim N(0,1).
\label{eq:proof-decoy}
\end{equation}
The orthogonality of $u_1,\ldots,u_M$ and their orthogonality to $u_0$
imply joint independence of all the Gaussian terms.  Comparing the scores
proves~\eqref{eq:exact-success}.
\end{proof}

\begin{proof}[Proof of Lemma~\ref{lem:alignment-monotonicity}]
Let $S_\mu:=\mu C_d+\sqrt{1-\mu^2}Z$ and
$F_\mu(t):=\Pr(S_\mu\le t)$.  For $0<\mu<1$, put
$\sigma^2:=1-\mu^2$.  Differentiating under the integral against the
$\chi_d$ density gives
\begin{equation}
\frac{\partial F_\mu(t)}{\partial\mu}
=-K_{d,\mu,t}
\int_0^\infty(c-\mu t)c^{d-1}
\exp\left(-\frac{(c-\mu t)^2}{2\sigma^2}\right)dc,
\label{eq:monotonicity-derivative}
\end{equation}
where $K_{d,\mu,t}>0$.  Denote the integral by $I$.  Since
$(c-\mu t)e^{-(c-\mu t)^2/(2\sigma^2)}$ equals $-\sigma^2$ times the
derivative of the exponential, integration by parts yields
\begin{equation}
I=
\begin{cases}
\sigma^2\exp\left(-\dfrac{\mu^2t^2}{2\sigma^2}\right),&d=1,\\[5pt]
\sigma^2(d-1)\displaystyle\int_0^\infty c^{d-2}
\exp\left(-\dfrac{(c-\mu t)^2}{2\sigma^2}\right)dc,&d>1.
\end{cases}
\label{eq:positive-monotonicity-integral}
\end{equation}
Thus $I>0$ and $\partial_\mu F_\mu(t)<0$ for every $t$.  Continuity handles
$\mu=0,1$, so $S_\mu$ is strictly increasing in first-order stochastic order.
Conditional on $S_\mu=s$, success against $M$ independent standard-normal
decoys has probability $\Phi(s)^M$, a strictly increasing function of $s$.
Therefore the unconditional success probability is strictly increasing in
$\mu$.  The critical-dimension conclusion follows immediately.
\end{proof}

\begin{lemma}[Gaussian maximum and chi concentration]
\label{lem:extremes}
Let $Z_1,\ldots,Z_M$ be independent $N(0,1)$ variables and let
$C_d\sim\chi_d$.  As $M,d\to\infty$,
\begin{equation}
\frac{\max_{j\le M}Z_j}{\sqrt{2\log M}}\to1,
\qquad
\frac{C_d}{\sqrt d}\to1
\label{eq:extreme-limits}
\end{equation}
in probability.
\end{lemma}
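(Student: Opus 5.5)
The lemma contains two independent limits, and I plan to prove them separately.

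\emph{Gaussian maximum.} I will bound $\max_{j\le M}Z_j$ from both sides.

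For the upper bound, fix $\epsilon>0$. A union bound with the Mills-ratio tail $\Pr(Z>t)\le e^{-t^2/2}/(t\sqrt{2\pi})$ at $t=(1+\epsilon)\sqrt{2\log M}$ gives
\[
\Pr\Bigl(\max_j Z_j>t\Bigr)\le M\,e^{-(1+\epsilon)^2\log M}=M^{1-(1+\epsilon)^2}\to0 .
\]

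For the lower bound, I use independence. At $t=(1-\epsilon)\sqrt{2\log M}$,
\[
\Pr\Bigl(\max_j Z_j\le t\Bigr)=(1-\Pr(Z>t))^M\le\exp\bigl(-M\Pr(Z>t)\bigr).
\]
The matching Mills lower bound $\Pr(Z>t)\ge \frac{t}{1+t^2}\,e^{-t^2/2}/\sqrt{2\pi}$ gives
\[
M\Pr(Z>t)\gtrsim M^{1-(1-\epsilon)^2}/\sqrt{\log M}\to\infty .
\]
Together the two bounds show $\max_j Z_j/\sqrt{2\log M}\to1$ in probability.

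\emph{Chi concentration.} I will work with $C_d^2\sim\chi^2_d$. It has mean $d$ and variance $2d$, so Chebyshev gives $C_d^2/d\to1$ in probability. The continuous mapping theorem applied to $x\mapsto\sqrt x$ then yields $C_d/\sqrt d\to1$.

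Alternatively, Laurent--Massart or Gaussian Lipschitz concentration of the norm gives the quantitative bound
\[
\Pr\bigl(|C_d-\sqrt d|>s\bigr)\le 2e^{-s^2/2}.
\]
This is the form I would record, since it is also what the later proofs of Theorem~\ref{thm:phase-transition} and the routing bounds in Theorem~\ref{thm:transformer-realization} will need.

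Neither step is a real obstacle; the only care needed is in the lower tail of the maximum. There the polynomial factor $1/\sqrt{\log M}$ from the Mills bound must be checked to be dominated by $M^{1-(1-\epsilon)^2}$, which holds for every fixed $\epsilon>0$. Since the two variables are unrelated, the joint limit $M,d\to\infty$ imposes no rate coupling: each convergence holds along its own index.
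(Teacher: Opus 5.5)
Your proof is correct and follows the paper's argument essentially step for step. The paper also uses a union bound for the upper tail of the maximum, the Mills lower bound together with $(1-p)^M\le e^{-Mp}$ for the lower tail, and a law-of-large-numbers argument for $C_d^2/d$ followed by continuous mapping; your Chebyshev step is just the proof of that weak law.

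The quantitative bound you propose to record is not needed here or in Theorem~\ref{thm:phase-transition}, which uses only convergence in probability. If you keep it, justify it by a direct Chernoff bound on $\chi^2_d$, which gives $e^{-s^2/2}$ for the upper tail and $e^{-s^2}$ for the lower tail. Lipschitz concentration alone centers $C_d$ at $\E C_d<\sqrt d$, not at $\sqrt d$, so it does not give your stated form directly.
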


\begin{proof}
For the upper Gaussian bound, for every $\varepsilon>0$, the union bound and
$\Pr(Z\ge t)\le e^{-t^2/2}$ give
\begin{align}
\Pr\left(\max_{j\le M}Z_j>(1+\varepsilon)\sqrt{2\log M}\right)
&\le
M\exp(-(1+\varepsilon)^2\log M)
\nonumber\\
&=M^{-2\varepsilon-\varepsilon^2}\to0.
\end{align}
For the lower bound, let $t=(1-\varepsilon)\sqrt{2\log M}$.  The Gaussian
Mills lower bound gives, for a universal $c>0$ and all sufficiently large
$t$,
\begin{equation}
\Pr(Z>t)\ge c\frac{e^{-t^2/2}}{t}.
\end{equation}
Therefore $M\Pr(Z>t)\to\infty$, and independence yields
\begin{equation}
\Pr\left(\max_{j\le M}Z_j\le t\right)
=(1-\Pr(Z>t))^M
\le e^{-M\Pr(Z>t)}\to0.
\end{equation}
This proves the first convergence.  For the second,
$C_d^2/d$ is the average of $d$ independent squared standard Gaussians, so it
converges to one in probability by the weak law of large numbers.  The square
root is continuous on $(0,\infty)$.
\end{proof}

\begin{proof}[Proof of Theorem~\ref{thm:phase-transition}]
Divide the two sides of~\eqref{eq:exact-success} by
$\sqrt{2\log M}$.  Lemma~\ref{lem:extremes} gives
\begin{equation}
\frac{\max_{j\le M}Z_j}{\sqrt{2\log M}}\to1.
\end{equation}
If $\lambda=0$, then
\begin{equation}
\E\left[
\left(\frac{\mu_M C_{d_M}}{\sqrt{2\log M}}\right)^2
\right]
=\frac{d_M\mu_M^2}{2\log M}\to0,
\end{equation}
so the aligned term converges to zero in probability without requiring
$d_M\to\infty$.

Now suppose $\lambda\in(0,\infty]$.  Since $\mu_M\le1$, the assumed ratio
implies $d_M\to\infty$, and
The aligned term satisfies
\begin{align}
\frac{\mu_M C_{d_M}}{\sqrt{2\log M}}
&=
\sqrt{\frac{d_M\mu_M^2}{2\log M}}
\frac{C_{d_M}}{\sqrt{d_M}}
\to\sqrt\lambda,
\label{eq:aligned-limit}
\end{align}
where convergence to $+\infty$ is understood when $\lambda=\infty$.
Because $|\sqrt{1-\mu_M^2}|\le1$ and a standard Gaussian is tight,
\begin{equation}
\frac{\sqrt{1-\mu_M^2}Z_0}{\sqrt{2\log M}}\to0
\end{equation}
in probability.  If $1<\lambda<\infty$, choose a constant strictly between
$1$ and $\sqrt\lambda$; the true score lies above that constant and the
maximum decoy below it with probability tending to one.  If
$0<\lambda<1$, choose a constant strictly between $\sqrt\lambda$ and $1$
and reverse the argument.  The cases $\lambda=0$ and $\lambda=\infty$
follow from the two endpoint limits above.  This proves~\eqref{eq:zero-one}.
\end{proof}

\paragraph{Additional consequence for the number of candidates.}
Let $N:=n-m$ and define
\begin{equation}
M_{\max}(d,\mu,\delta,N)
:=\max\{1\le M\le N:p_{d,M}(\mu)\ge1-\delta\},
\label{eq:Mmax-definition}
\end{equation}
where $p_{d,M}(\mu)$ is the probability in
Equation~\eqref{eq:exact-success}.  If
$\min\{d\mu^2,\log N\}\to\infty$, then
\begin{equation}
\log M_{\max}
=(1+o(1))\min\left\{\frac{d\mu^2}{2},\log N\right\}.
\label{eq:capacity-exponent}
\end{equation}

\begin{proof}[Proof of Corollary~\ref{cor:critical-dimension}]
For fixed $M$ and $\mu>0$, the success probability in
\eqref{eq:exact-success} is nondecreasing in $d$.  To see this, couple
$C_{d+1}=\sqrt{C_d^2+Z^2}$ with an additional independent standard Gaussian
$Z$; then $C_{d+1}\ge C_d$ almost surely, while all other variables in
\eqref{eq:exact-success} are unchanged.

Fix $\varepsilon>0$.  Theorem~\ref{thm:phase-transition} implies that at
dimensions
\begin{equation}
d_-=(1-\varepsilon)\frac{2\log M}{\mu_M^2},
\qquad
d_+=(1+\varepsilon)\frac{2\log M}{\mu_M^2},
\end{equation}
the success probabilities tend to zero and one, respectively.  For all
sufficiently large $M$, the critical dimension lies between $d_-$ and $d_+$.
Letting $\varepsilon\downarrow0$ proves~\eqref{eq:single-head-dimension}.
Solving the same transition relation for $\log M$ gives
\eqref{eq:capacity-exponent}.
\end{proof}

\section{Proofs for Frontier, Cumulative, and Weighted States}
\label{app:structures}

\begin{proof}[Proof of Theorem~\ref{thm:uniform-separation}]
Fix $u\in\mathcal U$.  Nonnegativity, unit norm, and
\eqref{eq:query-nonlocalization} give
\begin{equation}
1=\norm u_2^2
\le \norm u_\infty\norm u_1
\le \frac{C}{\sqrt m}\norm u_1.
\end{equation}
Therefore
\begin{equation}
\inner{a_R}{u}=\frac{\norm u_1}{\sqrt m}\ge\frac1C.
\end{equation}
For frontier,
\begin{equation}
\inner{a_F}{u}
=\frac1{\sqrt k}\sum_{v\in F}u_v
\le\sqrt k\norm u_\infty
\le C\sqrt{\frac{k}{m}}.
\end{equation}
Taking infima proves~\eqref{eq:nonlocalized-alignment-gap}.

Compactness and Lemma~\ref{lem:alignment-monotonicity} imply that the
worst-query success probability under state $a$ is exactly
$p_{d,M}(\Gamma(a;\mathcal U))$.  If
$\Gamma(a_F;\mathcal U)>0$, the uniform critical-dimension law gives
\begin{equation}
\frac{D_\star(a_R;\mathcal U,M,\delta)}
{D_\star(a_F;\mathcal U,M,\delta)}
=
\frac{\Gamma(a_F;\mathcal U)^2}
{\Gamma(a_R;\mathcal U)^2}(1+o(1))
\le C^4\frac{k}{m}(1+o(1)).
\end{equation}
The remainder is uniform over positive alignments by the asymptotic convention.
If $\Gamma(a_F;\mathcal U)=0$, compactness supplies a query with zero frontier
alignment; its correct answer is exchangeable with the $M$ decoys, giving the
stated infinite critical dimension.

It remains to prove sharpness.  Suppose first that $s=m/C^2$ is an integer and
$k\le s$.  Choose $S\subseteq R$ with $|S|=s$ and $F\subseteq S$, and let
$u_v=C/\sqrt m$ on $S$ and zero elsewhere.  Then $u$ is a nonnegative unit
vector satisfying~\eqref{eq:query-nonlocalization}.  For the compact singleton
class $\mathcal U=\{u\}$,
\begin{equation}
\Gamma(a_R;\mathcal U)=\frac1C,
\qquad
\Gamma(a_F;\mathcal U)=C\sqrt{\frac{k}{m}},
\end{equation}
so equality holds in~\eqref{eq:uniform-separation}.  If $m/C^2$ is not an
integer, use $\lfloor m/C^2\rfloor$ coordinates at the cap and one residual
coordinate.  The rounding changes the ratio by $1+o(1)$ whenever
$m/C^2\to\infty$.  The condition $k\le m/C^2$ holds eventually in every
fixed-$C$ growing-history regime with $k=o(m)$.
\end{proof}

Equation~\eqref{eq:mk-example} follows by taking $u=a_R$: then
$\inner{a_R}{u}=1$ and $\inner{a_F}{u}=\sqrt{k/m}$, and substitution into
Corollary~\ref{cor:critical-dimension} gives the two dimensions.

\begin{proof}[Proof of Theorem~\ref{thm:transformer-realization}]
All target coefficients, graph sets, masks, and candidate directions below
are fixed independently of $G$. We first bound attention gaps for these
intended states, then transfer the gaps to the actual recurrent computation.
The probability statement is uniform in the choice of a fixed
$u\in\mathcal U$, not a simultaneous event over all queries.

\paragraph{Intended-state attention gaps and width.}
At step $c$, set $a=a_c$ and $\alpha=a_{\min,c}$. Gaussian regression gives,
for $v\in S_c$,
\begin{equation}
\sqrt d\,\inner{q_a}{g_v}
\overset d=a_vC_d+\sqrt{1-a_v^2}\,Z_v,
\qquad C_d=\sqrt d\norm{Ga}\sim\chi_d,
\label{eq:membership-score}
\end{equation}
where each $Z_v$ is marginally standard normal. For $s\in I_c$,
\begin{equation}
\sqrt d\,\inner{q_a}{g_s}\sim N(0,1).
\label{eq:inactive-membership-score}
\end{equation}
These inactive scores are independent within the step and independent of
the intended state and required-source scores. The required-source scores
need not be mutually independent. On the event
\begin{equation}
C_d/\sqrt d\ge3/4,\qquad
\min_{v\in S_c}Z_v\ge-\alpha\sqrt d/4,\qquad
\max_{s\in I_c}\sqrt d\inner{q_a}{g_s}\le\alpha\sqrt d/4,
\label{eq:routing-good-event}
\end{equation}
required scores are at least $\alpha/2$, inactive scores are at most
$\alpha/4$, and the threshold $3\alpha/8$ has margin $\alpha/8$.
Gaussian and chi-square tails imply, for absolute constants $A,b>0$,
\begin{equation}
\Pr(\mathcal E_{\rm route}(d)^c)
\le A\sum_{c<D}n\exp(-bd\,a_{\min,c}^2).
\label{eq:routing-tail}
\end{equation}
The same bound applies to failure of the stronger margin event just used.
For a lower bound, successful routing requires the weakest source at each
step to beat $\Theta(n)$ inactive keys. Its score law is
Lemma~\ref{lem:exact-scores} with alignment $a_{\min,c}$, so
Corollary~\ref{cor:critical-dimension} gives
$\Omega(\log n/a_{\min,c}^2)$. Since $D,\delta$ are fixed, these lower
bounds and~\eqref{eq:routing-tail} prove~\eqref{eq:transformer-routing-law}.

For any fixed final query $u$, Lemma~\ref{lem:exact-scores} applies
unconditionally to the deterministic coefficients $a_D,u$. Monotonicity
in alignment and compactness give the worst-query alignment
$\Gamma=\Gamma(a_D;\mathcal U)$. Consequently
\begin{equation}
d_{\rm ans}=(2+o(1))\frac{\log M}{\Gamma^2}.
\label{eq:appendix-final-width}
\end{equation}
Here and in~\eqref{eq:matching-dimension}, $M\to\infty$ at fixed failure
probability, with the remainder uniform over positive alignments as in
Corollary~\ref{cor:critical-dimension}. Allocating half the error to routing
and half to the final answer proves the upper bound for $d_D$. Joint success
requires each event separately, proving its lower bound and hence
\eqref{eq:d-step-width-law}. No independence between steps is used.

\paragraph{A common event for the actual computation.}
Fix $u\in\mathcal U$ and consider either the cumulative or frontier target
sequence. Set $\rho=\min(\alpha_\star,\Gamma)$. For all $c\le D$ require
\begin{equation}
\big|\norm{Ga_c}-1\big|\le\epsilon,
\qquad \epsilon=K\sqrt{\log(16(D+1)/\delta)/d}\le1/4.
\label{eq:update-norm-event}
\end{equation}
Also require every node key, $Gu$, and every competing final key to have
norm at most $2$. Gaussian norm concentration and a union bound ensure
these events with total failure probability at most $\delta/2$, provided
$d\ge K\log((n+M+D+1)/\delta)$ and $K$ is sufficiently large.
Assign probability $\delta/4$ to failure of the route-margin events above.
For the final head, with $\mu=\inner{a_D}{u}\ge\Gamma$, require
\begin{equation}
\inner{q_{a_D}}{Gu}\ge\Gamma/2,
\qquad \max_{j\ge1}\inner{q_{a_D}}{Gu_j}\le\Gamma/4.
\label{eq:final-margin-event}
\end{equation}
Indeed, in the exact score law, $C_d/\sqrt d\ge3/4$ and
$Z_0\ge-\Gamma\sqrt d/4$ imply the first inequality; the usual Gaussian
maximum bound implies the second. Their total failure probability is at
most $\delta/4$ when $d\ge K\log((M+1)/\delta)/\Gamma^2$.
The intersection thus has probability at least $1-\delta$. It is built from
fixed coefficient directions before considering the actual updates.
All subsequent arguments are deterministic on this intersection.

\paragraph{Finite attention gates and destination writes.}
Let $e_c=\norm{\widetilde q_c-q_{a_c}}$ and assume $e_c\le\rho/32$.
Every source score changes by at most $2e_c$, leaving threshold margin at
least $\alpha_\star/16$. Thus~\eqref{eq:block1-gate} has active gates in
$[1-h,1]$ and inactive gates in $[0,h]$, where
$h=\exp(-\beta\alpha_\star/16)$. Let $L$ bound the number of edge tokens
per call and let $r_{\max}=\max_c r_c$. With $h\le1/4$, the inactive mass
of the destination head is at most $Le^{-T}$. Among active edges, its logits
vary by at most $2Th$, so its conditional distribution has $\ell_1$
distance at most $e^{2Th}-1$ from the uniform distribution. For $2Th\le1$,
this distance is at most $4Th$. Since every destination value has norm
at most $2$, after output scaling,
\begin{equation}
\left\|\widetilde o_c-G\frac{\mathbf1_{N_c}}{\sqrt{r_c}}\right\|
\le8\sqrt{r_{\max}}Th+4\sqrt{r_{\max}}Le^{-T}.
\label{eq:destination-head-error}
\end{equation}
This uses one active edge per new destination. For $0<\zeta<1$, choose
\begin{equation}
T\ge\max\{1,\log(8\sqrt{r_{\max}}L/\zeta)\},\qquad
h\le\min\{1/4,1/(2T),\zeta/(16\sqrt{r_{\max}}T)\}.
\label{eq:softmax-leakage}
\end{equation}
A finite $\beta$ achieves this bound on $h$; both scales are chosen from
the deterministic counts and tolerances, independently of $G$. Hence the
vector error in~\eqref{eq:destination-write} is at most $\zeta$.

\paragraph{Residual writing and normalization across all steps.}
The masked copy head is exact. The prescribed scalar branches and fixed
output projection write $(\lambda_c-1)\widetilde q_c+
\eta_c\widetilde o_c$, which combines with the residual as
\eqref{eq:transformer-update}. RMSNorm is used with fixed gain $1/\sqrt d$
on the $d$-dimensional state channel and zero normalization offset, so it
returns the unit direction.
For cumulative, write $m_c=|R_c|$, $b_c=\mathbf1_{N_c}/\sqrt{r_c}$.
Disjointness gives the exact coefficient identity
\begin{equation}
a_{R_{c+1}}=
\frac{\sqrt{m_c}a_{R_c}+\sqrt{r_c}b_c}{\sqrt{m_c+r_c}}.
\end{equation}
Compare the actual pre-normalization state
$\widetilde z_c=\sqrt{m_c}\widetilde q_c+\sqrt{r_c}\widetilde o_c$
with $y_c=\sqrt{m_c+r_c}\,Ga_{R_{c+1}}$. By
\eqref{eq:update-norm-event} and~\eqref{eq:destination-head-error},
\begin{equation}
\frac{\norm{\widetilde z_c-y_c}}{\sqrt{m_c+r_c}}
\le e_c+\epsilon+\zeta,
\qquad
\frac{\norm{y_c}}{\sqrt{m_c+r_c}}\ge1-\epsilon.
\label{eq:cumulative-direction-error}
\end{equation}
For nonzero $x,y$, the normalization inequality
$\norm{x/\norm x-y/\norm y}\le2\norm{x-y}/\norm y$
therefore yields
\begin{equation}
e_{c+1}\le4e_c+4\epsilon+4\zeta.
\label{eq:update-error-recursion}
\end{equation}
The small-error choices below ensure $\widetilde z_c\ne0$.
For frontier, $b_c=a_{F_{c+1}}$ and the old branch is discarded, giving
$e_{c+1}\le4\zeta$, which also satisfies~\eqref{eq:update-error-recursion}.
Initialize $\widetilde q_0=q_{a_0}$. With
$H_D=4(4^D-1)/3$, induction gives $e_c\le H_D(\epsilon+\zeta)$.
Choose $\epsilon,\zeta\le\rho/(128H_D)$. Then $e_c\le\rho/64$ at every
step, which closes the gate-margin induction. Absorbing the absolute
constant in $\epsilon$ into $A_D$ proves~\eqref{eq:trace-state-error}.

\paragraph{Final attention and sufficient width.}
The perturbation of any final score is at most $2e_D$ on the key-norm event.
Thus the ideal gap $\Gamma/4$ in~\eqref{eq:final-margin-event} remains
strictly positive: its reduction is at most $4e_D\le\Gamma/16$.
The actual final head selects the useful key. This is a pathwise transfer,
not a Gaussian calculation conditional on $\widetilde q_D$.
All concentration and induction requirements hold under
\begin{equation}
d\ge K_D\max\left\{
\frac{\log(Dn/\delta)}{\alpha_\star^2},
\frac{\log((M+1)/\delta)}{\Gamma^2},
\frac{\log(16(D+1)/\delta)}{\rho^2}\right\}.
\end{equation}
For fixed $D,\delta$ and $M\ge2$, the last term and the key-norm bound are
absorbed by a constant multiple of~\eqref{eq:weighted-dimension}.
This proves the actual-computation sufficient-width claim.
The constant tracks the accumulation factor $H_D$; the theorem makes its
asymptotic comparison at fixed $D$.
\end{proof}

\begin{proof}[Proof of Theorem~\ref{thm:minimax}]
Write $L:=\log(Dn/\delta)$ and $B:=2\log M$, and interpret a zero
alignment or reusable coefficient as infinite width.  For every $c<D$ and
every $a_c\in\mathcal A_{R_c}$,
\begin{equation}
\left(\min_{v\in R_c}a_{c,v}\right)^2
\le \frac1{|R_c|}\sum_{v\in R_c}a_{c,v}^2
=\frac1{|R_c|},
\label{eq:min-coordinate}
\end{equation}
with equality only at $a_c=a_{R_c}$.  Hence the routing part of
$\mathfrak D(\boldsymbol a_R;\mathcal U)$ is no larger than that of any
$\mathfrak D(\boldsymbol a;\mathcal U)$.

For the readout part, Theorem~\ref{thm:uniform-separation} gives
$\Gamma(a_{R_D};\mathcal U)\ge1/C$.  Cauchy--Schwarz gives
$\Gamma(a_D;\mathcal U)\le1$ for every unit $a_D$, so
\begin{equation}
\frac{B}{\Gamma(a_{R_D};\mathcal U)^2}
\le C^2B
\le C^2\frac{B}{\Gamma(a_D;\mathcal U)^2}.
\label{eq:minimax-readout-factor}
\end{equation}
Combining the routing inequality and~\eqref{eq:minimax-readout-factor}, and
using $C\ge1$, yields
\begin{equation}
\mathfrak D(\boldsymbol a_R;\mathcal U)
\le C^2\mathfrak D(\boldsymbol a;\mathcal U)
\end{equation}
for every weighted sequence $\boldsymbol a$.  Taking the infimum proves
\eqref{eq:minimax-cumulative-approx}.

The factor is sharp.  Suppose $s:=|R_D|/C^2$ is an integer, choose
$S\subseteq R_D$ with $|S|=s$, and let
$u=\mathbf1_S/\sqrt s$ and $\mathcal U=\{u\}$.  This class is
$C$-nonlocalized.  Cumulative has terminal alignment $1/C$, whereas the
weighted terminal state $a_D=u$ has alignment one.  Keep all earlier states
cumulative.  Whenever the readout scale $B$ is at least the cumulative routing
scale $L\max_{c<D}|R_c|$, the two objectives are respectively $C^2B$ and $B$,
so equality holds in~\eqref{eq:minimax-cumulative-approx}.
Such instances are compatible with the common-bank constraint: take $D=1$,
$|R_0|=1$, and enough unused vertices that $M$ is proportional to $n$.
For nonintegral $m/C^2$, cap $\lfloor m/C^2\rfloor$ coordinates and use
one residual coordinate; as $m\to\infty$, the ratio tends to $C^2$.

It remains to prove exact robust optimality.  Put $m:=|R_D|$ and let
$u_\star$ minimize $\norm u_1$ over the nonempty compact set
$\mathcal U_C(R_D)$.  This full class is closed under every coordinate
permutation $P$ of $R_D$.  Hence, for any $a_D\in\mathcal A_{R_D}$,
\begin{align}
\Gamma(a_D;\mathcal U_C(R_D))
&\le \frac1{m!}\sum_P\inner{a_D}{Pu_\star}
=\frac{\norm{a_D}_1\norm{u_\star}_1}{m}\notag\\
&\le\frac{\norm{u_\star}_1}{\sqrt m}
=\Gamma(a_{R_D};\mathcal U_C(R_D)).
\label{eq:robust-terminal-optimum}
\end{align}
The second inequality uses $\norm{a_D}_1\le\sqrt m\norm{a_D}_2=\sqrt m$;
the final equality follows because
$\inner{a_{R_D}}u=\norm u_1/\sqrt m$.  Thus $a_{R_D}$ minimizes the robust
readout term.  Equation~\eqref{eq:min-coordinate} shows that every earlier
$a_{R_c}$ minimizes its routing term.  The cumulative sequence therefore
minimizes their maximum, proving~\eqref{eq:minimax-cumulative}.

Finally, restricting the comparison to any family containing
$\boldsymbol a_R$ can only increase its infimum, so the same $C^2$ bound
continues to hold. Since the full-family minimum is attained by
$\boldsymbol a_R$, its exact minimax statement also survives this
restriction. The actual cumulative construction has sufficient width
$K_{D,\delta}\mathfrak D(\boldsymbol a_R;\mathcal U)$ by
Theorem~\ref{thm:transformer-realization}. Combining these two inequalities
gives the implementation guarantee stated in the remark. The precise
factor $C^2$ concerns the defined objective $\mathfrak D$; the construction's
stability constant is accounted for separately.
\end{proof}

\section{Exact Reduction for an Arbitrary Fixed Bank}
\label{app:arbitrary-bank}

The orthogonal bank makes the main dimension law transparent.  The following
identity shows that the same signal-versus-competition structure persists for
an arbitrary fixed correlated bank.

\begin{theorem}[Arbitrary-bank Gaussian reduction]
\label{thm:arbitrary-bank}
Fix a unit state $a$ and deterministic candidate directions
$u_0,u_1,\ldots,u_M$, independent of $G$.  For each $j\ge1$, let
\begin{equation}
h_j:=u_0-u_j,
\qquad
\gamma_j:=\inner a{h_j}>0,
\qquad
r_j:=h_j-\gamma_j a\in a^\perp.
\label{eq:arbitrary-decomp}
\end{equation}
Let $g\sim N(0,I_n)$ and $C_d\sim\chi_d$ be independent.  Then
\begin{equation}
\Pr(\widehat j(q_a)=0)
=
\Pr\left(
\max_{1\le j\le M}
\inner g{-r_j/\gamma_j}
<C_d
\right).
\label{eq:arbitrary-exact}
\end{equation}
\end{theorem}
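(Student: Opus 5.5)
The plan is to reduce the event $\{\widehat j(q_a)=0\}$ to pairwise score differences and then repeat the Gaussian-regression decomposition from the proof of Lemma~\ref{lem:exact-scores}, with the full Gaussian process over $a^\perp$ in place of a single independent coordinate. Write $Y:=Ga$. Since $q_a=Y/\norm Y$ and $\norm Y>0$ almost surely, the decoder selects $0$ exactly when $\inner{Y}{Gh_j}>0$ for every $j\ge1$. (Ties have probability zero; this is checked at the end.) Using $h_j=\gamma_j a+r_j$ and linearity of $G$, we get $Gh_j=\gamma_jY+Gr_j$. Each pairwise condition therefore becomes
\begin{equation}
\gamma_j\norm Y^2+\inner{Y}{Gr_j}>0
\iff
\norm Y>\Bigl\langle \tfrac{Y}{\norm Y},\,G\bigl(-r_j/\gamma_j\bigr)\Bigr\rangle ,
\end{equation}
where dividing by $\gamma_j$ is legitimate because $\gamma_j>0$ by hypothesis.

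The next step is to identify the joint law of the right-hand sides together with $\norm Y$. Let $P$ denote the orthogonal projector onto $a^\perp$ in $\R^n$, and decompose $G=Ya^\top+GP$. Because the rows of $G$ are i.i.d.\ $N(0,I_n/d)$, the matrix $GP$ is independent of $Ga=Y$, exactly as in the proof of Lemma~\ref{lem:exact-scores}. Put $w:=Y/\norm Y$. Conditional on $Y$, the vector $\sqrt d\,PG^\top w$ is distributed as $N(0,P)$ by rotational invariance, since $w$ is a fixed unit vector given $Y$. This conditional law does not depend on $Y$, so $\sqrt d\,PG^\top w$ is independent of $Y$, and in particular of $\sqrt d\norm Y\sim\chi_d$. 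Realize it as $Pg$ with $g\sim N(0,I_n)$ independent of $C_d:=\sqrt d\norm Y$. Since each $r_j\in a^\perp$, we have
\begin{equation}
\sqrt d\,\inner{w}{Gr_j}=\inner{\sqrt d\,PG^\top w}{r_j}\overset d=\inner{Pg}{r_j}=\inner g{r_j},
\end{equation}
and this identity holds jointly in $j$. Multiplying the pairwise conditions by $\sqrt d$ then shows that success is equivalent to $C_d>\max_j\inner g{-r_j/\gamma_j}$, which is \eqref{eq:arbitrary-exact}.

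The step needing the most care is the \emph{joint} independence claim. I must show that the entire vector $PG^\top w$, and not just each coordinate separately, is independent of $\norm Y$, while $w$ itself depends on $Y$. I would handle this by conditioning on $Y$ and noting that the conditional law $N(0,P/d)$ is free of $Y$. Two degenerate cases remain. If some $r_j=0$, its term is the constant $0<C_d$ almost surely, which is consistent with both sides of \eqref{eq:arbitrary-exact}. Ties between candidates, and events where $C_d$ equals the maximum, have probability zero because $C_d$ has a continuous law and is independent of $g$. Hence the argmax in \eqref{eq:decoder} is almost surely well defined, and strict inequality gives exactly the event $\{\widehat j(q_a)=0\}$.
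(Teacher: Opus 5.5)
Your proof is correct and follows essentially the same route as the paper. You write $Gh_j=\gamma_j Ga+Gr_j$, use $r_j\perp a$ to make $Ga$ independent of the family $\{Gr_j\}$, and observe that, conditionally on $Y=Ga$, the normalized scores form the Gaussian process $\{\inner g{-r_j/\gamma_j}\}$ with a law free of $Y$, hence independent of $C_d=\sqrt d\,\norm Y\sim\chi_d$. Your projector argument for joint independence and your treatment of the null events where candidate $0$ ties with some $j$ make precise what the paper states in one line.
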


\begin{proof}
The correct candidate beats candidate $j$ exactly when
\begin{align}
0
&<\inner{Ga}{G(u_0-u_j)}
\nonumber\\
&=\gamma_j\norm{Ga}^2+\inner{Ga}{Gr_j}.
\end{align}
Since $\gamma_j>0$, simultaneous success is equivalent to
\begin{equation}
\max_{j\le M}\inner{Ga}{G(-r_j/\gamma_j)}<\norm{Ga}^2.
\label{eq:conditional-event}
\end{equation}
Every $r_j$ is orthogonal to $a$.  Joint Gaussianity therefore makes $Ga$
independent of the family $\{G(-r_j/\gamma_j)\}_{j\le M}$.  Conditional on
$Y:=Ga$, the process on the left of~\eqref{eq:conditional-event} has the same
law as
\begin{equation}
\frac{\norm Y}{\sqrt d}
\left\{
\inner g{-r_j/\gamma_j}
\right\}_{j\le M}.
\end{equation}
Canceling $\norm Y>0$ and using
$\sqrt d\norm Y\sim\chi_d$ proves~\eqref{eq:arbitrary-exact}.
\end{proof}

Theorem~\ref{thm:arbitrary-bank} requires no semantic choice of query and no
orthogonality.  Correlation changes only the maximum of the fixed Gaussian
family.  In the isotropic model, that maximum is
$(1/\mu)\max_{j\le M}Z_j$ up to a common centered Gaussian term, recovering
Theorem~\ref{thm:phase-transition}.  More generally, the critical dimension is
the squared typical scale of the Gaussian maximum in
\eqref{eq:arbitrary-exact}.  The frontier--cumulative comparison remains a
comparison under the same bank; the simplified $\log M/\mu^2$ form is obtained
when the competition geometry is common to both states.

\section{Experimental Details and Additional Results}
\label{app:experimental-details}

\subsection{Dataset and query construction}

We use the public graph-4 ProsQA splits released with
\citet{zhu2025reasoning}.  Table~\ref{tab:prosqa-statistics} records the full
split statistics.  The paired width sweep uses every training and test trace
and a fixed 120-trace validation subset.

\begin{table}[ht]
\centering
\small
\caption{ProsQA graph-4 statistics.  Graph sizes and solution lengths are
averaged over traces.}
\label{tab:prosqa-statistics}
\begin{tabular}{lrrrr}
\toprule
Split & \# traces & $|V|$ & $|E|$ & solution length \\
\midrule
Train & 14,785 & 22.8 & 36.5 & 3.5 \\
Validation & 257 & 22.7 & 36.3 & 3.5 \\
Test & 419 & 22.7 & 36.0 & 3.5 \\
\bottomrule
\end{tabular}
\end{table}

For the lightweight two-block sweep, queries are sampled in coefficient space
before the Gaussian embedding.  Each query contains the complete terminal
frontier but remains strictly partial,
uses between $75\%$ and $85.7\%$ of the reached history, and has within-support
coefficient ratio at most $\rho=1.2$.  These choices imply the certified cap
$C=1.386$; the largest realized caps are smaller on every split.  The same
query, Gaussian draw, useful candidate, and decoy bank are reused for frontier
and cumulative.

\begin{table}[ht]
\centering
\small
\caption{Audit of the fixed nonlocalized query bank.  ``Queries'' counts
coefficient-space queries before repeated Gaussian test draws.}
\label{tab:query-audit}
\begin{tabular}{lrrrr}
\toprule
Split & Used traces & Queries & Support fraction & Realized $C_{\max}$ \\
\midrule
Train & 14,785 & 118,280 & $[0.750,0.857]$ & 1.319 \\
Validation & 120 & 480 & $[0.750,0.857]$ & 1.271 \\
Test & 419 & 3,352 & $[0.750,0.857]$ & 1.287 \\
\bottomrule
\end{tabular}
\end{table}

\subsection{Lightweight two-block width sweep}

The learned construction is the two-attention router in
Theorem~\ref{thm:transformer-realization}.  Its source query and key projections
are initialized as $I+0.08Z/\sqrt d$; the membership inverse temperature and
destination temperature are initialized to $8$ and $10$ and trained jointly.
Frontier and cumulative share all router parameters and optimizer updates.
The best epoch is selected by the sum of their validation state cosines.

\begin{table}[ht]
\centering
\small
\caption{Configuration of the lightweight two-block paired experiment.}
\label{tab:formal-config}
\begin{tabular}{ll}
\toprule
Component & Configuration \\
\midrule
Widths & $16,24,32,40,48,56,64,80,96,112,128$ \\
Training seeds / embedding seeds & $0{:}4$ / $5100{:}5104$ \\
Query-bank size per trace (train/val/test) & $8/4/8$ \\
Decoys (train/val/test) & $256/1{,}024/4{,}096$ \\
Gaussian draws (val/test) & $1/4$ \\
Epochs and optimizer & 4; AdamW, learning rate $3\times10^{-3}$ \\
Regularization & weight decay $10^{-4}$; gradient clip $1.0$ \\
Gradient accumulation & 16 traces \\
Loss weights (gate/state/query) & $1/2/0.1$ \\
Query seed and certified cap & 1701; $C=1.385640646$ \\
Primary endpoint & linearly interpolated $d_{0.9}$ at test success $0.9$ \\
\bottomrule
\end{tabular}
\end{table}

At test time, success is top-1 retrieval of the fixed useful candidate against
4,096 decoys.  Each training seed is evaluated on 419 traces, eight queries per
trace, and four Gaussian draws.  Statistical intervals are therefore computed
across the five independent training seeds, rather than by treating the
within-seed evaluations as independent replications.  During training, each
trace uses one query per epoch, cycling deterministically through the
pre-generated eight-query bank; hence four epochs expose four query IDs per
trace.

\subsection{Complete fixed-query results}

Table~\ref{tab:full-width-results} gives the complete width sweep underlying
Figure~\ref{fig:frontier-cumulative}.  Intervals are two-sided 95\%
Student-$t$ intervals across five training seeds.  ``Ideal'' supplies the
analytic state to the same decoder and is not an additional trained model.

\begin{table}[!ht]
\centering
\small
\caption{Held-out top-1 accuracy at every tested width.}
\label{tab:full-width-results}
\begin{tabular}{rcccc}
\toprule
$d$ & Learned frontier & Learned cumulative & Ideal frontier & Ideal cumulative \\
\midrule
16  & $0.048\pm0.001$ & $0.254\pm0.012$ & $0.052\pm0.001$ & $0.442\pm0.003$ \\
24  & $0.114\pm0.004$ & $0.574\pm0.008$ & $0.120\pm0.005$ & $0.790\pm0.004$ \\
32  & $0.210\pm0.007$ & $0.793\pm0.015$ & $0.216\pm0.007$ & $0.947\pm0.005$ \\
40  & $0.322\pm0.005$ & $0.899\pm0.010$ & $0.328\pm0.006$ & $0.990\pm0.002$ \\
48  & $0.430\pm0.008$ & $0.946\pm0.004$ & $0.436\pm0.008$ & $0.9987\pm0.0002$ \\
56  & $0.533\pm0.010$ & $0.972\pm0.005$ & $0.540\pm0.009$ & $0.9999\pm0.0002$ \\
64  & $0.622\pm0.008$ & $0.984\pm0.003$ & $0.630\pm0.008$ & $1.0000\pm0.0001$ \\
80  & $0.758\pm0.010$ & $0.992\pm0.002$ & $0.767\pm0.010$ & $1.000\pm0.000$ \\
96  & $0.847\pm0.005$ & $0.997\pm0.001$ & $0.855\pm0.006$ & $1.000\pm0.000$ \\
112 & $0.905\pm0.006$ & $0.998\pm0.001$ & $0.913\pm0.005$ & $1.000\pm0.000$ \\
128 & $0.941\pm0.005$ & $0.998\pm0.001$ & $0.949\pm0.003$ & $1.000\pm0.000$ \\
\bottomrule
\end{tabular}
\end{table}

The threshold is stable at the individual-seed level
(Table~\ref{tab:critical-width-seeds}).  Linear interpolation gives a learned
cumulative/frontier ratio of $0.365\pm0.008$ (mean and 95\% interval); the
fixed-alignment prediction is
$(\mu_F/\mu_R)^2=(0.5011/0.8893)^2=0.318$.  The analytic states first cross
the tested grid at widths 112 and 32 for frontier and cumulative, respectively.
At $d=40$, learned-state cosines
are $0.966\pm0.006$ for frontier and $0.933\pm0.006$ for cumulative.  Thus the
cumulative accuracy advantage cannot be explained by a higher state cosine.

\begin{table}[ht]
\centering
\small
\caption{Interpolated empirical critical dimension $d_{0.9}$ for every
lightweight-model training seed.}
\label{tab:critical-width-seeds}
\begin{tabular}{rrrr}
\toprule
Seed & Frontier & Cumulative & C/F \\
\midrule
0 & 108.9 & 39.2 & 0.360 \\
1 & 112.1 & 39.9 & 0.356 \\
2 & 110.3 & 40.4 & 0.366 \\
3 & 111.7 & 41.6 & 0.372 \\
4 & 110.1 & 40.6 & 0.369 \\
\midrule
Mean & 110.6 & 40.3 & 0.365 \\
95\% interval & $\pm1.6$ & $\pm1.1$ & $\pm0.008$ \\
\bottomrule
\end{tabular}
\end{table}

thm:uniform-separation
\subsection{GPT-2 replication: configuration and complete results}

The GPT-2 replication uses the same ProsQA graph-4 traces and the same paired
comparison, but replaces the lightweight router with a two-layer causal GPT-2
Transformer.  Each edge is represented by causal source and destination
tokens followed by one continuous-thought token.  The two states share the
model, optimizer updates, coefficient-space query, Gaussian codebook, useful
candidate, and decoy bank; only the frontier versus cumulative residual update
differs.  Queries are generated before the Gaussian map, use the full reached
support ($\alpha=1$), and draw positive coefficients with imbalance at most
$\rho=1.1$.

\begin{table}[ht]
\centering
\small
\caption{Complete configuration of the paired GPT-2 width sweep.}
\label{tab:gpt2-config}
\renewcommand{\arraystretch}{1.08}
\begin{tabular}{@{}p{0.39\columnwidth}p{0.55\columnwidth}@{}}
\toprule
Component & Configuration \\
\midrule
Backbone & causal GPT-2; 2 layers, 8 heads, FFN width $4d$ \\
Transformer details & GELU, context 512, dropout 0.1, residuals and LayerNorm \\
Tested widths / maximum codebook width & $32,64,96,128,192,256,384$ / 768 \\
Model seeds / embedding seeds & $0{:}4$ / $6100{:}6104$ \\
Used traces (train/validation/test) & $800/120/120$ \\
Queries per trace (train/validation/test) & $8/4/4$ \\
Query construction & full reached support; $\rho=1.1$; seed 2701 \\
Decoys (train/validation/test) & $256/1{,}024/4{,}096$ \\
Gaussian draws (validation/test) & $1/2$ \\
Optimizer & AdamW; learning rate $10^{-4}$; weight decay $10^{-2}$ \\
Optimization & clip 1.0; accumulation 16; at most 40 epochs \\
Loss weights (routing/state/query) & $1/10/0.1$ \\
Stopping rule & after epoch 6: routing $\ge .98$ and both cosines $\ge .80$ \\
Stopping patience / checkpoint selection & 2 epochs / best summed validation cosine \\
Decoder and endpoint & shared top-1 decoder; interpolated $d_{0.9}$ \\
\bottomrule
\end{tabular}
\end{table}

Each seed--width model is evaluated on 120 held-out traces, four fixed queries
per trace, and two Gaussian draws, with the useful candidate ranked against
4,096 fixed decoys.  Table~\ref{tab:gpt2-full-width-results} reports all tested
widths; intervals are two-sided 95\% Student-$t$ intervals over the five model
seeds.

\begin{table}[ht]
\centering
\small
\caption{Complete GPT-2 held-out top-1 accuracy sweep.}
\label{tab:gpt2-full-width-results}
\begin{tabular}{rcc}
\toprule
$d$ & Frontier & Cumulative \\
\midrule
32  & $0.468\pm0.038$ & $0.898\pm0.024$ \\
64  & $0.900\pm0.026$ & $0.998\pm0.003$ \\
96  & $0.978\pm0.009$ & $1.000\pm0.000$ \\
128 & $0.993\pm0.004$ & $1.000\pm0.000$ \\
192 & $0.998\pm0.002$ & $1.000\pm0.000$ \\
256 & $1.000\pm0.001$ & $1.000\pm0.000$ \\
384 & $1.000\pm0.000$ & $1.000\pm0.000$ \\
\bottomrule
\end{tabular}
\end{table}

The seed-level interpolants in Table~\ref{tab:gpt2-critical-width-seeds} are
the raw values summarized in the main-text figure.  Cumulative has a lower
critical dimension for every paired seed; the mean ratio is
$0.521\pm0.067$.

\begin{table}[htbp]
\centering
\small
\caption{GPT-2 interpolated critical dimension $d_{0.9}$ for every model seed.}
\label{tab:gpt2-critical-width-seeds}
\begin{tabular}{rrrr}
\toprule
Seed & Frontier & Cumulative & C/F \\
\midrule
0 & 63.2 & 32.0 & 0.507 \\
1 & 68.6 & 33.0 & 0.481 \\
2 & 73.5 & 36.2 & 0.493 \\
3 & 62.6 & 32.0 & 0.511 \\
4 & 63.0 & 38.8 & 0.616 \\
\midrule
Mean & 66.2 & 34.4 & 0.521 \\
95\% interval & $\pm5.9$ & $\pm3.8$ & $\pm0.067$ \\
\bottomrule
\end{tabular}
\end{table}

\subsection{Weighted routing protocol and raw codebook results}

At $d=192$, the identity-regularized router is trained for four epochs on 600
traces and six profiles with 64 Gaussian route decoys, AdamW
($\mathrm{lr}=3\times10^{-3}$, weight decay $10^{-4}$), clip $1.0$, accumulation
16, initialization noise $0.02$, and identity penalty $10$.  We select by mean
route success minus $0.02$ projection error on 80 validation traces, 256
Gaussian decoys, and one draw.  The frozen checkpoint is evaluated without
retraining on five codebooks, 120 test traces, two draws, all unreached graph
keys, and 512 additional Gaussian decoys per route step.

\begin{table}[htbp]
\centering
\small
\caption{Raw reusable-token routing results at $d=192$.  Each cell reports
mean joint route-step success (fraction) / per-inactive-key false-positive rate
(\%).  A route step succeeds iff every currently reached node is accepted and
every unreached graph node and all 512 Gaussian decoys are rejected.}
\label{tab:weighted-codebooks}
\begin{tabular}{rccc}
\toprule
Codebook seed & Uniform cumulative & Recency-heavy & Random mild \\
\midrule
4903 & $0.896/0.032$ & $0.576/0.572$ & $0.533/1.318$ \\
5004 & $0.904/0.028$ & $0.594/0.585$ & $0.541/1.282$ \\
5105 & $0.907/0.028$ & $0.593/0.557$ & $0.536/1.312$ \\
5206 & $0.913/0.026$ & $0.587/0.564$ & $0.541/1.336$ \\
5307 & $0.909/0.029$ & $0.578/0.570$ & $0.534/1.280$ \\
\midrule
Mean & $0.906/0.029$ & $0.586/0.570$ & $0.537/1.306$ \\
\bottomrule
\end{tabular}
\end{table}

The corresponding mean minimum coefficient, normalized by the uniform value,
is $1.000$, $0.814$, and $0.775$.  The monotone association between this
weakest coefficient, joint success, and false positives is the token-level
pattern predicted by the $a_{\min}$ mechanism in
Theorem~\ref{thm:minimax}.

\subsection{Software and computing resources}

We use Python 3.10, PyTorch 2.5.1, CUDA 12.4, NumPy 2.1.3, and Transformers
4.46.2 on NVIDIA A100 GPUs.  The lightweight sweep contains 55 trained models
and records 37.6 summed GPU-hours; the GPT-2 replication contains 35
single-GPU training runs.  The four-width weighted run records 0.17 GPU-hours
and its token analysis reuses the $d=192$ checkpoint.

\end{document}